\documentclass[english]{article}
\usepackage{geometry}
\usepackage[T1]{fontenc}
\usepackage[utf8]{inputenc}
\usepackage{bm}
\usepackage{amsmath}
\usepackage{amssymb} 
\usepackage{hyperref}
\hypersetup{
 colorlinks=true,
 citecolor=blue,
 filecolor=blue,
 linkcolor=blue,
 urlcolor=blue
}
\usepackage{xcolor,colortbl}
\definecolor{Gray}{gray}{0.85}
\usepackage{enumitem}

\usepackage{url}
\usepackage{float}
\usepackage{placeins}
\usepackage{longtable}
\usepackage{microtype}
\usepackage{graphicx}
\usepackage{subcaption}
\usepackage{booktabs}

\usepackage{mathtools}
\usepackage{amsthm}
\usepackage{algorithm}
\usepackage{algorithmic}
\usepackage[capitalize,noabbrev]{cleveref}
\usepackage{natbib}
\usepackage{bbm}

\usepackage{pifont}

\theoremstyle{plain}
\newtheorem{theorem}{Theorem}[section]
\newtheorem{proposition}[theorem]{Proposition}
\newtheorem{lemma}[theorem]{Lemma}

\theoremstyle{definition}

\newtheorem{assumption}[theorem]{Assumption}
\theoremstyle{remark}
\newtheorem{remark}[theorem]{Remark}

\newcommand{\norm}[1]{\left\lVert #1 \right\rVert}

\newcommand{\E}{\mathbb{E}}

\definecolor{lxs}{RGB}{200,0,0}

\usepackage{xspace}
\providecommand{\ul}[1]{\underline{#1}}
\newcommand{\RLPO}{{\sf LoRA-RLPO}\xspace}
\newcommand{\RLMO}{{\sf LoRA-RLMO}\xspace}

\newcommand{\mymid}{\,|\,} 

\usepackage{scalerel,stackengine}
\stackMath
\newcommand\reallywidehat[1]{%
\savestack{\tmpbox}{\stretchto{%
  \scaleto{%
    \scalerel*[\widthof{\ensuremath{#1}}]{\kern-.6pt\bigwedge\kern-.6pt}%
    {\rule[-\textheight/2]{1ex}{\textheight}}
  }{\textheight}%
}{0.5ex}}%
\stackon[1pt]{#1}{\tmpbox}%
}
\newcommand\reallywidecheck[1]{%
\savestack{\tmpbox}{\stretchto{%
  \scaleto{
    \scalerel*[\widthof{\ensuremath{#1}}]{\kern-.6pt\bigwedge\kern-.6pt}%
    {\rule[-\textheight/2]{1ex}{\textheight}}
  }{\textheight}%
}{0.5ex}}%
\stackon[1pt]{#1}{\scalebox{-1}{\tmpbox}}%
}

\title{Geometry-Preserving Orthonormal Initialization for\\
Low-Rank Adaptation in RLVR}

\author{
    Ruijia Zhang\thanks{Department of Applied Mathematics and Statistics, Johns Hopkins University, MD, USA.} \\
    JHU
    \and
    Jiacheng Zhu\thanks{Meta Superintelligence Labs.}  \\
    Meta
    \and 
    Hanqing Zhu\thanks{Department of Electrical and Computer Engineering, The University of Texas at Austin, TX, USA.} \\
    UT Austin
    \and
    Laixi Shi\thanks{Department of Electrical and Computer
Engineering, Johns Hopkins University, MD, USA.} \\
    JHU
}

\date{\today}

\begin{document}

\maketitle

\begin{abstract}
  Low-rank adaptation (LoRA) and its variants enable parameter-efficient fine-tuning of large language models under the supervised fine-tuning (SFT) paradigm.
  However, their efficacy and behavior under Reinforcement learning with verifiable rewards (RLVR) are less well understood. In particular, two structurally
  initialized LoRA variants, PiSSA and MiLoRA, which outperform standard LoRA under SFT, can underperform standard LoRA under RLVR and may even exhibit
  training instability. These observations suggest that how to initialize the low-rank matrices in RLVR remains unclear. In this work, we develop a
  theoretical analysis of LoRA in RLVR, showing that orthonormal initialization achieves the minimal gap between LoRA’s outcome and that of full fine-tuning.
  Guided by this insight, we propose geometry-preserving orthonormal initialization for low-rank adaptation in RLVR, leading to two new variants, \RLPO and
  \RLMO. Experiments on mathematical reasoning benchmarks show that the proposed orthonormal initialization stabilizes RLVR training and outperforms standard LoRA,
  contrasting with PiSSA and MiLoRA. Finally, our unified analysis for LoRA initialization also explains why PiSSA and MiLoRA can underperform in RLVR, which may be of independent
  interest. Code and checkpoints are publicly available at \href{https://github.com/Richard-ZZZ/geometry-preserving-orthonormal-init-rlvr}{the repository}.
  \end{abstract}

\tableofcontents

\section{Introduction}
\label{sec:intro}

Large language models (LLMs) \citep{NEURIPS2020_1457c0d6,touvron2023llamaopenefficientfoundation} are typically pretrained on large-scale dataset via next-token prediction \citep{NEURIPS2020_1457c0d6} and then fine-tuned on relatively smaller datasets to specialize for downstream applications. This paradigm has achieved remarkable success across diverse domains, including mathematical reasoning \citep{luo2025wizardmathempoweringmathematicalreasoning,azerbayev2024llemmaopenlanguagemodel}, code generation \citep{rozière2024codellamaopenfoundation,luo2024wizardcoder}, healthcare \citep{singhal2023large,chen2023meditron70bscalingmedicalpretraining}, and finance \citep{wu2023bloomberggptlargelanguagemodel,yang2025fingptopensourcefinanciallarge}. Because fine-tuning is far more accessible than pretraining a new LLM, it has attracted substantial interest in the community. While fine-tuning all parameters in an LLM (“full fine-tuning”) is natural, it is practically highly memory-intensive: fully fine-tuning even a 7B model can require over 100GB of GPU memory \citep{dettmers2023qlora}. This high resource demand limits accessibility for practitioners and motivates parameter-efficient fine-tuning (PEFT) methods, which update only a small subset of parameters while keeping the base model frozen \citep{houlsby2019parameter,li2021prefix}. Among them, Low-Rank Adaptation (LoRA) \citep{hu2022lora} is widely used due to its efficiency and ease of implementation. For any weight matrix $W_0 \in \mathbb{R}^{m \times n}$ in a pretrained model, LoRA parameterizes the update as $\Delta W^{\text{lora}}= BA$ with $B \in \mathbb{R}^{m \times r}$ and $A \in \mathbb{R}^{r \times n}$, where $r$ is much smaller than $\min(m,n)$. This significantly reduces the number of parameters under training, while still yielding a dense matrix $W_0 + \Delta W^{\text{lora}}$ at inference time.

Beyond supervised fine-tuning (SFT), a widely used LLM fine-tuning paradigm that trains on high-quality question--response pairs, reinforcement learning with verifiable rewards (RLVR) has recently emerged as a pivotal paradigm, proving effective across tasks such as mathematical reasoning and coding~\citep{Guo_2025,shao2024deepseekmathpushinglimitsmathematical}. RLVR uses rule-based feedback (e.g., answer correctness) instead of learned reward models. However, RLVR incurs substantially higher memory costs than SFT, as it requires keeping a reference model in memory to compute KL divergence~\citep{ziegler2020finetuninglanguagemodelshuman,zhou2024understandingalleviatingmemoryconsumption} and storing multiple responses per prompt for group-based advantage estimation~\citep{shao2024deepseekmathpushinglimitsmathematical}. This makes LoRA and its memory-efficient variants particularly attractive for RLVR, especially given that LoRA has already shown strong potential in this setting, matching full fine-tuning in certain cases~\citep{schulman2025lora}.

Despite this progress, the behavior of LoRA and its structural variants under RLVR remains less understood than under SFT, limiting further advances in low-rank fine-tuning for RL. In particular, how to initialize the low-rank matrices $B$ and $A$ is increasingly unclear in light of recent observations. PiSSA~\citep{meng2024pissa} and MiLoRA~\citep{wang2025milora}, two LoRA variants that improve performance and accelerate convergence in SFT, can underperform standard LoRA under RLVR and may even exhibit training instability~\citep{yin2025evaluatingparameterefficientmethods}. Both methods initialize $B$ and $A$ via the singular value decomposition (SVD) of pretrained weights, but in opposite directions: PiSSA uses the top-$r$ principal singular directions, while MiLoRA targets the bottom-$r$ tail directions.  In addition, prior work suggests that, due to the KL constraint, RLVR updates are encouraged to stay close to the reference policy ~\citep{wu2026invisibleleashrlvrescape,shenfeld2025rlsrazoronlinereinforcement} and may favor off-policy subspaces that differ from those preferred by SFT ~\citep{zhu2025pathtakenrlvrprovably}. This discrepancy between RLVR and SFT likely reflects their distinct optimization dynamics. Consequently, LoRA design principles developed for SFT are no longer guaranteed to transfer to RLVR, leaving the appropriate initialization and subspace choice in RLVR an open question.
In this work, we focus on:
\begin{center}
    {\em What initialization is effective for low-rank adaptation in RLVR fine-tuning?}
\end{center}

To this end, we provide a rigorous analysis demonstrating that by initializing $B = 0$ in accordance with standard LoRA, orthonormal initialization for $A$ is potentially optimal and yields superior performance in practice. Our primary contributions are as follows:
\begin{itemize}
\vspace{-5pt}
\setlength\itemsep{0em}
    \setlength\parskip{0.5em}
    \setlength\parsep{0em}
    \item \textbf{Orthonormal initialization towards optimal.} To understand the behavior of LoRA, we provide a theoretical analysis of LoRA's optimization dynamics, showing that orthonormal initialization of $A$ with $B=0$ minimizes the gap between LoRA and full fine-tuning (Theorem ~\ref{thm:lora_error}).
    
    \item \textbf{Geometry-preserving orthonormal initialization.}
  Motivated by this result, we propose \RLPO and \RLMO, SVD-based initializations for $A$ that remain orthonormal while preserving geometric information from pretrained weights. Both methods outperform standard LoRA in RLVR in the conducted experiments.

    \item \textbf{Insights into LoRA variants' failures in RLVR.} Our analysis framework offers a unified explanation for the instability of SVD-based methods such as PiSSA and MiLoRA in RLVR. Their failures stem from two coupled factors: \textbf{subspace geometry}, which accelerates updates along specific directions, and \textbf{singular value scaling}, which amplifies update magnitudes. Together, these induce aggressive optimization trajectories that rapidly violate the implicit KL constraint, destabilizing training regardless of whether principal or minor singular directions are targeted.
   
\end{itemize}

\begin{figure*}[t]
    \centering
    \includegraphics[width=1.0\linewidth]{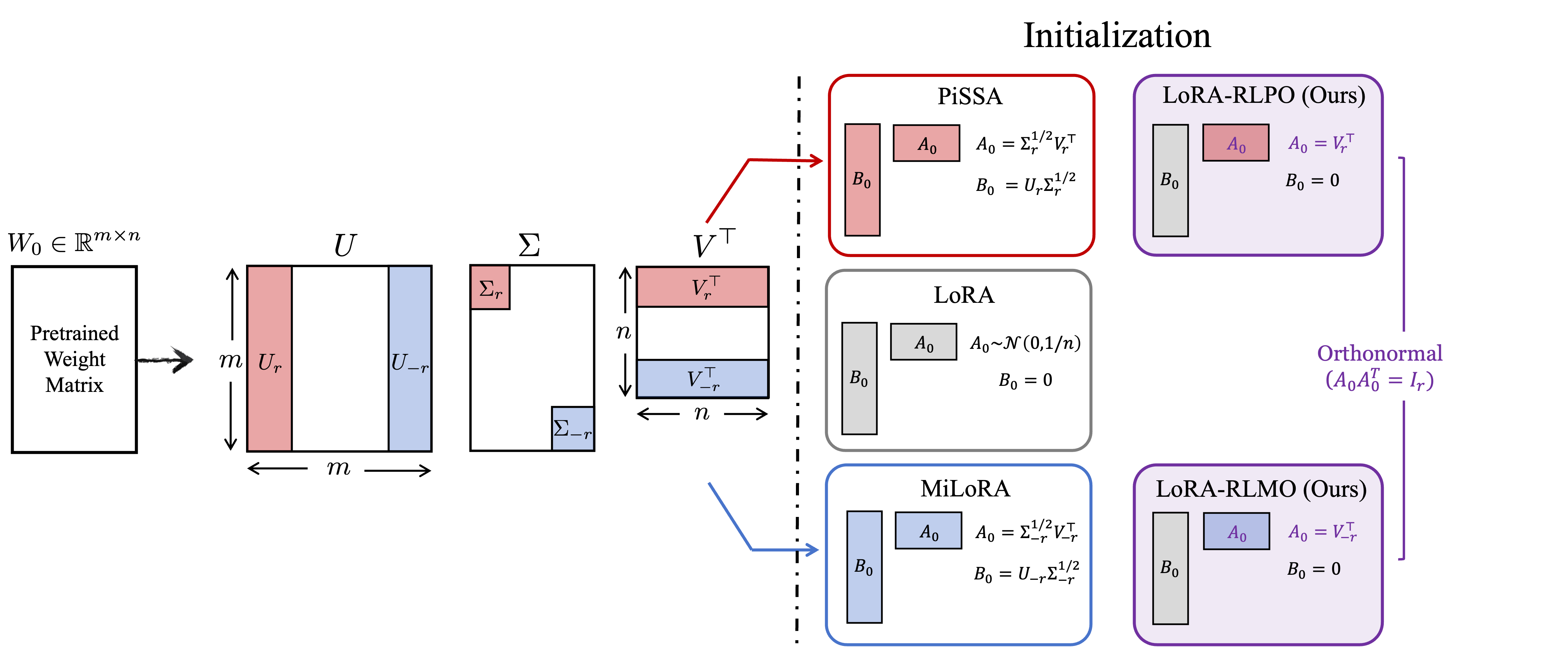}
    \caption{Comparison of LoRA initialization strategies. LoRA uses random Gaussian $A_0$ with $B_0 = 0$. PiSSA and MiLoRA initialize both adapters from the principal and minor singular components of $W_0$, respectively, with $B_0 \neq 0$ and singular value scaling. Our proposed methods, \textbf{\RLPO} and \textbf{\RLMO}, initialize orthonormal $A_0$ from the principal and minor right singular vectors with $B_0 = 0$.}
    \label{fig:visual}
    \vspace{-10pt}
\end{figure*}

\section{Related Works}
\label{sec:related}

\paragraph{Low-rank adaptation (LoRA) and its variants.}
Among parameter-efficient fine-tuning (PEFT) methods, LoRA~\citep{hu2022lora} and its variants have become a popular class, parameterizing weight updates as a product of two low-rank matrices while keeping the base model frozen. Numerous variants have been proposed to improve upon standard LoRA. One line modifies the optimization process: AdaLoRA~\citep{zhang2023adaloraadaptivebudgetallocation} adaptively allocates rank across layers based on importance scores; LoRA+~\citep{pmlr-v235-hayou24a} uses different learning rates for $A$ and $B$; DoRA~\citep{liu2024dora} decomposes updates into magnitude and direction components; rsLoRA~\citep{kalajdzievski2023rankstabilizationscalingfactor} adjusts the scaling factor to stabilize training at higher ranks; and VeRA~\citep{kopiczko2024vera} shares frozen random matrices across layers to further reduce parameters. Another line improves LoRA initialization beyond the default random scheme~\citep{hu2022lora,pmlr-v235-hayou24a}. SVD-based methods have drawn particular attention: PiSSA~\citep{meng2024pissa} initializes adapters using principal singular components, while MiLoRA~\citep{wang2025milora} uses minor components. These methods achieve faster convergence and improved performance in supervised fine-tuning, but recent evaluations show that they underperform standard LoRA and exhibit instability in RLVR~\citep{yin2025evaluatingparameterefficientmethods}. Our work follows this line and focuses on understanding and addressing this gap through geometry-preserving orthonormal initialization.

\paragraph{Orthonormality in LoRA.}
Several prior works have explored the role of orthonormality in LoRA, primarily in the context of supervised fine-tuning. \citet{zhu2024asymmetrylowrankadaptersfoundation} investigate the asymmetry between the two LoRA matrices, showing that $A$ extracts features from inputs while $B$ maps these features to outputs; they further demonstrate that fixing $A$ as a random orthonormal matrix and training only $B$ outperforms standard LoRA.  OLoRA~\citep{büyükakyüz2024oloraorthonormallowrankadaptation} uses Qrthogonal-Right triangular(QR) decomposition to initialize both LoRA matrices with orthonormal bases derived from the pretrained weights, achieving faster convergence on SFT tasks. From a complementary perspective, OFT~\citep{qiu2023controlling} and BOFT~\citep{liu2024parameter} enforce orthogonality of weight updates throughout training, rather than only at initialization. However, these studies are largely empirical and confined to the SFT regime, whose learning dynamics differ substantially from those of RLVR. Our work provides the first theoretical explanation for why orthonormal initialization improves LoRA in RLVR, showing that it enables LoRA to more closely track the trajectory of full fine-tuning.

\paragraph{LoRA for Reinforcement learning with verifiable rewards.}
While LoRA has been extensively studied in supervised fine-tuning~\citep{hu2022lora,liu2024dora,kalajdzievski2023rankstabilizationscalingfactor,pmlr-v235-hayou24a}, its behavior under RL-based fine-tuning remains far less understood, despite its widespread adoption for memory-efficient PPO and GRPO training on consumer hardware~\citep{santacroce2023efficientrlhfreducingmemory,Guo_2025,shao2024deepseekmathpushinglimitsmathematical}. \citet{zhu2025pathtakenrlvrprovably} provides theoretical analysis showing that RLVR updates favor off-principal directions, in contrast to SFT which targets principal components, suggesting that methods designed for SFT may not transfer directly to RLVR. \citet{yin2025evaluatingparameterefficientmethods} systematically evaluate PEFT methods under RLVR and find that SVD-based initializations such as PiSSA and MiLoRA underperform standard LoRA and exhibit training instability. Despite this progress, the appropriate initialization and subspace choice for LoRA in RLVR remains unsettled. Our work addresses this gap by providing a unified theoretical framework that explains both failure modes and showing that, when paired with the standard LoRA choice of $B=0$, geometry-preserving orthonormal initialization of $A$ offers a principled practical solution.

\section{Background}
\label{sec:background}

In this section, we formalize the fine-tuning problem for LLMs. Consider an LLM parameterized by $\theta = \{W^{(\ell)}\}_{\ell=1}^{L}$, the collection of weight matrices across its $L$ layers (e.g., fully connected and attention layers). Without loss of generality and a slight abuse of notation,  we focus on a single weight matrix $W$ in the following discussion. Full fine-tuning optimizes the model parameters $\theta$ by updating all weight matrices. Specifically, for any pretrained weight matrix $W_0 \in \mathbb{R}^{m \times n}$, full fine-tuning learns $W = W_0 + \Delta W^{\mathrm{full}}$, where the update $\Delta W^{\mathrm{full}} \in \mathbb{R}^{m \times n}$ is unconstrained, so as to minimize a task-specific loss $\mathcal{L}(\theta) = \mathcal{L}(\{W^{(\ell)}\}_{\ell=1}^{L})$. This approach requires storing full gradients and optimizer states for all weight matrices, which can become prohibitive for large-scale models.

\subsection{Low-rank adaptation (LoRA) and variants}
\label{sec:lora_variants}
We first review the LoRA algorithm \citep{hu2022lora}. Consider a pretrained weight matrix $W_0 \in \mathbb{R}^{m\times n}$, LoRA parameterizes the weight update as
\begin{equation}
W = W_0 + \Delta W^{\text{lora}}, \qquad \text{with} \quad \Delta W^{\text{lora}} = BA,
\label{eq:lora}
\end{equation}
where $B \in \mathbb{R}^{m\times r}$ and $A \in \mathbb{R}^{r\times n}$ are low-rank matrices with rank $r$ much smaller than $m$ and $n$ ($r \ll \min\{m,n\}$). Consequently, the optimization of $\Delta W^{\text{lora}}$ is restricted to a low-rank subspace of $\mathbb{R}^{m\times n}$. The initialization is set as follows:
\begin{equation}
    B_0 = 0_{m \times r}, \quad A_0 \sim \mathcal{N}\left(0, \tfrac{1}{n}\right)^{r \times n}.
    \label{eq:lora_init}
\end{equation}

\paragraph{SVD-based initialization variants.}
Beyond the initialization in \eqref{eq:lora_init}, many prior works propose alternative initializations for low-rank fine-tuning. We describe two representative SVD-based variants below.
Let $W_0 = U\Sigma V^\top$ be the singular value decomposition, where $U \in \mathbb{R}^{m\times k}$, $\Sigma=\mathrm{diag}(\sigma_1,\ldots,\sigma_k)$ with $\sigma_1 \ge \sigma_2 \ge \cdots \ge \sigma_k > 0$, and $V \in \mathbb{R}^{n\times k}$. Here, $k=\mathrm{rank}(W_0)$, i.e., the number of positive singular values of $W_0$.

\textbf{PiSSA} \citep{meng2024pissa} initializes with top-$r$ principal components as follows:
\begin{equation}
    B_0 = U_r \Sigma_r^{1/2}, \quad A_0 = \Sigma_r^{1/2} V_r^\top
    \label{eq:pissa}
\end{equation}
where $U_r$ and $V_r$ denote the first $r$ columns of $U$ and $V$, respectively, and $\Sigma_r$ is the $r \times r$ diagonal matrix of the top $r$ singular values.

\textbf{ MiLoRA} \citep{wang2025milora} initializes using the bottom-$r$ minor components:
\begin{equation}
    B_0 = U_{-r} \Sigma_{-r}^{1/2}, \quad A_0 = \Sigma_{-r}^{1/2} V_{-r}^\top,
    \label{eq:milora}
\end{equation}
where $U_{-r}$ and $V_{-r}$ denote the last $r$ columns of $U$ and $V$, respectively, and $\Sigma_{-r}$ is the $r \times r$ diagonal matrix of the bottom $r$ singular values.

Both methods then replace each pretrained matrix $W_0$ with the residual $W_0 - B_0 A_0$, which is kept frozen, and optimize only $BA$ thereafter. Equivalently, the effective weight is parameterized as $W = (W_0 - B_0 A_0) + BA$, ensuring $W = W_0$ at initialization regardless of which singular components are used.

\subsection{Finetuning paradigms: SFT vs.\ RLVR}\label{sec:finetuning_paradigms}

Besides the parameter-update setting, we now introduce two widely used fine-tuning frameworks and their corresponding objective functions. 
\paragraph{Supervised fine-tuning (SFT).} Consider a supervised dataset $\mathcal{D} = \{(q_i, a_i^\star)\}_{i=1}^N$
consisting of many input-output pairs, where \(q_i\) denotes a prompt or instruction and \(a_i^\star\) is the corresponding ground-truth response. The SFT process minimizes cross-entropy against ground-truth labels as follows:
\begin{equation*}
    \mathcal{L}_{\text{SFT}}(\theta) := -\E_{(q, a^*) \sim \mathcal{D}}\left[\log \pi_\theta(a^* | q)\right].
\end{equation*}
SFT imposes no explicit constraint on the weight movement, allowing the parameters to drift arbitrarily far from $W_0$. 

\paragraph{Reinforcement learning with verifiable rewards (RLVR).}
RLVR fine-tunes large language models with RL using automatically
verifiable, rule-based rewards $R$ (e.g., exact-match correctness on
math or code), thereby eliminating the need for a learned reward model.
In this work, we focus on DAPO~\citep{yu2026dapo} because it is a representative state-of-the-art RLVR algorithm for long-CoT reasoning and, unlike GRPO, removes the explicit KL penalty to the reference policy, making it a cleaner setting for isolating and understanding the factors that affect RLVR training stability.

Moreover, DAPO and related clipped-policy RLVR algorithms are widely used for evaluating LoRA-style adaptation, making them a natural testbed for analyzing the effect of low-rank initialization~\citep{yin2025evaluatingparameterefficientmethods}.
Specifically, DAPO samples a group of outputs $\{o_i\}_{i=1}^{G}$ for
each question $q$ paired with answer $a$, and updates the policy by
optimizing the following clipped importance-ratio objective:
\begin{equation}\label{eq:rlvr}
\begin{aligned}
\mathcal{L}^{\mathrm{DAPO}}(\theta^+)
&= \mathbb{E}_{(q,a)\sim\mathcal{D},\,\{o_i\}\sim\pi_{\theta}(\cdot\mid q)}
\Bigg[
\frac{1}{\sum_{i=1}^{G}|o_i|}
\sum_{i=1}^{G}\sum_{t=1}^{|o_i|} \\
& \qquad\min\!\Big(r_{i,t}(\theta^+)\hat{A}_{i,t},\,
\operatorname{clip}\!\big(r_{i,t}(\theta^+),1-\epsilon_{\mathrm{low}},1+\epsilon_{\mathrm{high}}\big)\hat{A}_{i,t}\Big)
\Bigg], \\
&\text{s.t.}\quad
0 < \big|\{o_i \mid \mathsf{is\_equivalent}(a,o_i)\}\big| < G,
\end{aligned}
\end{equation}
where $r_{i,t}(\theta^+)
= \frac{\pi_{\theta^+}(o_{i,t}\mid q,o_{i,<t})}
{\pi_{\theta}(o_{i,t}\mid q,o_{i,<t})}, \hat{A}_{i,t}
= \frac{R_i-\operatorname{mean}(\{R_i\}_{i=1}^{G})}
{\operatorname{std}(\{R_i\}_{i=1}^{G})}.$
A key ingredient of DAPO is the clipped importance ratio, which constrains
$r_{i,t}(\theta^+)$ to
$[1-\epsilon_{\mathrm{low}},\,1+\epsilon_{\mathrm{high}}]$ and thereby
implicitly limits the policy drift between $\pi_{\theta^+}$ and
$\pi_\theta$.

\paragraph{Training stability demands constrained KL divergence.}
The conservative-update principle implemented by DAPO's clipped importance ratio is not unique to DAPO: it underlies many popular policy-gradient algorithms (e.g., TRPO~\citet{pmlr-v37-schulman15} and PPO~\citet{schulman2017proximalpolicyoptimizationalgorithms}) as well as RLVR popular variants such as GRPO~\citep{shao2024deepseekmathpushinglimitsmathematical}. By preventing the updated policy from deviating too far from the current one in a single step, the clipping mechanism can be viewed as a surrogate method to constrain the following KL divergence between $\pi_{\theta^+}$ and $\pi_\theta$~\citep{10.5555/645531.656005} in a safe region:
\begin{equation}\label{eq:kl_constraint}
  D_{\mathrm{KL}}(\pi_{\theta^+}\,\|\,\pi_\theta)
  =
  \mathbb{E}_{q\sim\mathcal{D},\, y\sim\pi_{\theta^+}(\cdot\mid q)}
  \left[
  \sum_{t}
  \log
  \frac{
  \pi_{\theta^+}(y_t\mid q,o_{<t})
  }{
  \pi_{\theta}(y_t\mid q,o_{<t})
  }
  \right].
\end{equation}
Consequently, an excessively large $D_{\mathrm{KL}}(\pi_{\theta^+}\,\|\,\pi_\theta)$ may violate this implicit trust-region constraint and can cause performance degradation or training collapse, since the surrogate objective is no longer guaranteed to be a lower bound of the true reward objective~\citep{10.5555/645531.656005}. We therefore adopt $D_{\mathrm{KL}}(\pi_{\theta^+}\,\|\,\pi_\theta)$ as a key diagnostic for the training stability of RLVR, and report its sample estimate over response tokens in the empirical analyses that follow.

\section{Instability of SVD-Based LoRA Initializations in RLVR}
Noting that the behavior of LoRA variants in RLVR remains underexplored,
limiting further advances in low-rank fine-tuning for many tasks such as reasoning. In this work, we focus on studying the \emph{initialization} module of LoRA
for RLVR, focusing on two prominent variants (PiSSA and
MiLoRA) that have demonstrated strong performance in supervised
fine-tuning (SFT). While both PiSSA and MiLoRA outperform standard LoRA in SFT, they instead underperform standard LoRA under RLVR and even exhibit clear \ul{training collapse}, as shown in Figure~\ref{fig:Pissa/milora} (left). To understand the source of this failure, we monitor the introduced KL divergence in \eqref{eq:kl_constraint} and the Frobenius norm of the gradient for training stability, as shown in Figure~\ref{fig:Pissa/milora} (middle and right). 
 Both PiSSA and MiLoRA incur substantially larger gradient norms and higher cumulative KL divergence than standard LoRA throughout training, indicating markedly unstable optimization processes. These observations raise a natural question: {\em
Why do LoRA initialization principles that succeed in SFT, such as those of PiSSA and MiLoRA, break down under RLVR?}

\begin{figure}
    \centering
    \includegraphics[width=1\linewidth]{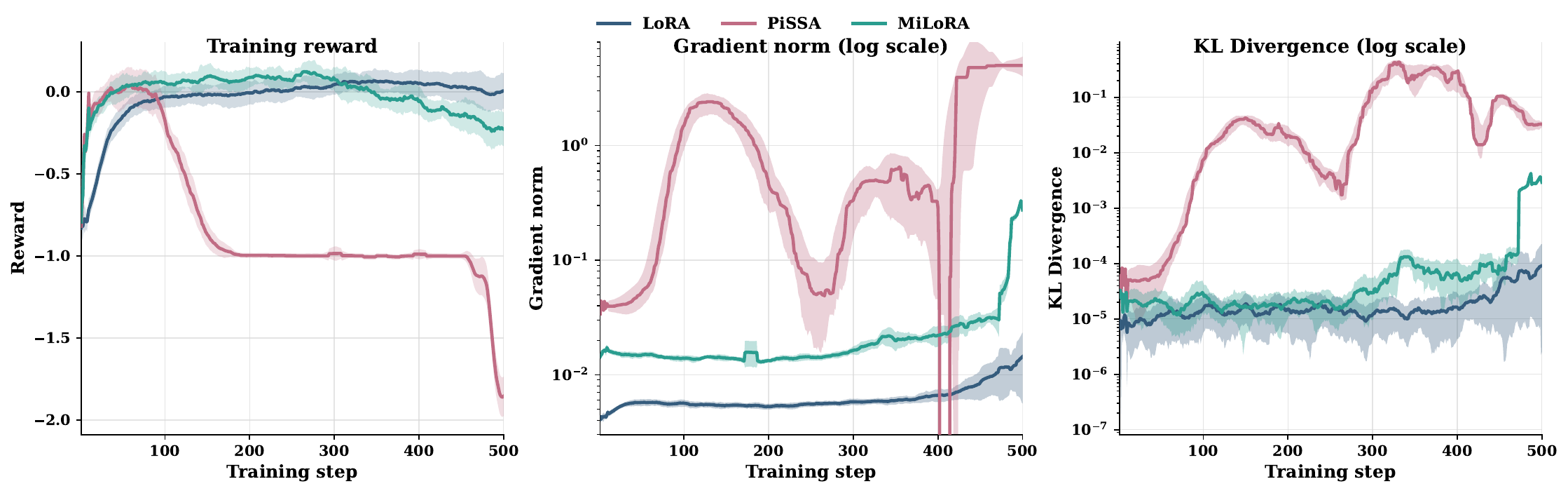}
    \caption{Training dynamics of RLVR via DAPO on benchmark DAPO-MATH. Left: Training reward comparison. Middle: The aggregate Frobenius norm of gradients over trainable parameters. Right: KL divergence during training. Both PiSSA and MiLoRA exhibit training reward collapse, higher gradient norm and KL divergence than standard LoRA.}
    \label{fig:Pissa/milora}
    \vspace{-10pt}
\end{figure}

\paragraph{Initialization of LoRA largely governs final optimization outcome.}
To answer the question, we begin by analyzing how the initialization step shapes the
subsequent optimization trajectory and the resulting fine-tuned
model. To this end, we visualize the post-training update
distribution and cumulative energy after RLVR fine-tuning in
Figure~\ref{fig:svd_style_lora_pissa_milora}. Recall that PiSSA
(cf.~\eqref{eq:pissa}) and MiLoRA (cf.~\eqref{eq:milora}) initialize
LoRA in the top and bottom singular-vector subspaces, respectively,
with singular value scaling through $\Sigma_r^{1/2}$ or
$\Sigma_{-r}^{1/2}$. As shown in
Figure~\ref{fig:svd_style_lora_pissa_milora}, the update magnitudes
and final energy distributions of the fine-tuned outputs closely
mirror these initialized subspaces: PiSSA produces substantially
larger updates along the top singular-vector directions, MiLoRA
concentrates its updates along the bottom singular-vector
directions, and in both cases the fine-tuned outcomes retain high
cumulative energy in the corresponding spectral regions. This
demonstrates that LoRA initialization does \emph{not} merely set the
optimization starting point, but rather it intrinsically governs the entire
optimization trajectory and, consequently, the final outcome.

\begin{figure*}[t]
      \centering
\includegraphics[width=0.95\linewidth]{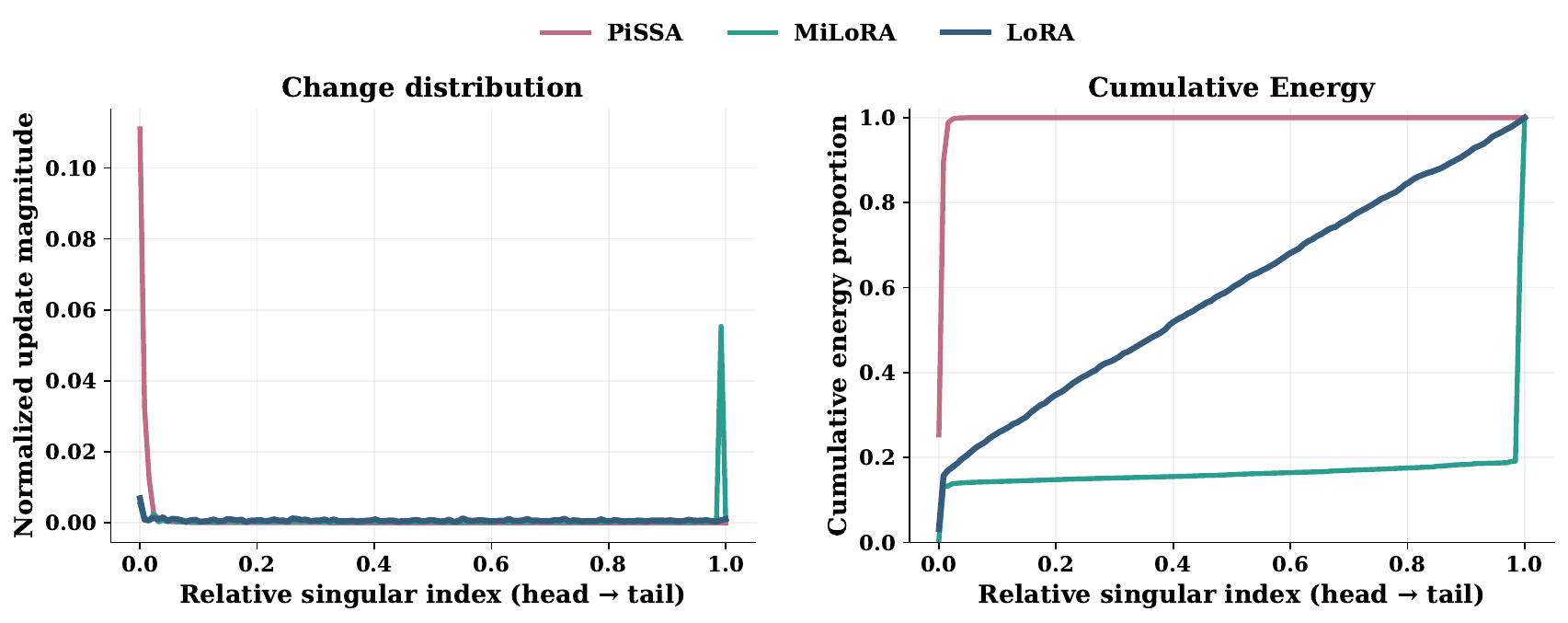}
 \caption{\textbf{SVD-aligned update distribution and cumulative energy after RLVR training.}
  For each method, we analyze the trained LoRA update
  \(\Delta W = \frac{\alpha}{r}BA\) on the query projection and attention output projection
  matrices from Transformer layers 0, 14, and 27 of the 28-layer model, where each projection
  weight has size \(W \in \mathbb{R}^{1536 \times 1536}\). Given the singular value decomposition
  of the frozen pretrained weight \(W = U\Sigma V^\top\), we measure the update magnitude along
  the pretrained singular directions as \(c_i = |u_i^\top \Delta W v_i|\), and normalize it by
  \(p_i = c_i / \sum_j c_j\). The left panel plots \(p_i\) over the relative singular-mode index,
  ordered from the largest to smallest singular values of \(W\), while the right panel plots the
  corresponding cumulative energy \(\sum_{j \leq i} c_j^2 / \sum_j c_j^2\). Curves are averaged
  over the six analyzed projection matrices. The distinct patterns indicate that initialization
  affects not only the starting point of optimization, but also the spectral structure of the
  updates learned during RLVR.}
\label{fig:svd_style_lora_pissa_milora}
  \end{figure*}

\subsection{Instability sources of LoRA family in RLVR.}
Another key observation from
Figure~\ref{fig:svd_style_lora_pissa_milora} is that, after
geometry-informed initialization, the final learned updates remain
highly concentrated along the predefined singular directions,
producing aggressively amplified weight changes. Such directionally
intensified updates likely drive the large gradient norms and rapid
KL growth observed under RLVR in
Figure~\ref{fig:Pissa/milora}, which in turn contribute to the
performance collapse of PiSSA and MiLoRA. To pinpoint the source of
this instability, we disentangle three potential factors:
(1) the learning rate during optimization, and two arising from
initialization, namely, (2) the selected singular subspace, which
determines the update direction, and (3) the singular value
scaling, which amplifies updates along that direction.

\begin{figure}[t]
    \centering
\includegraphics[width=1\linewidth]{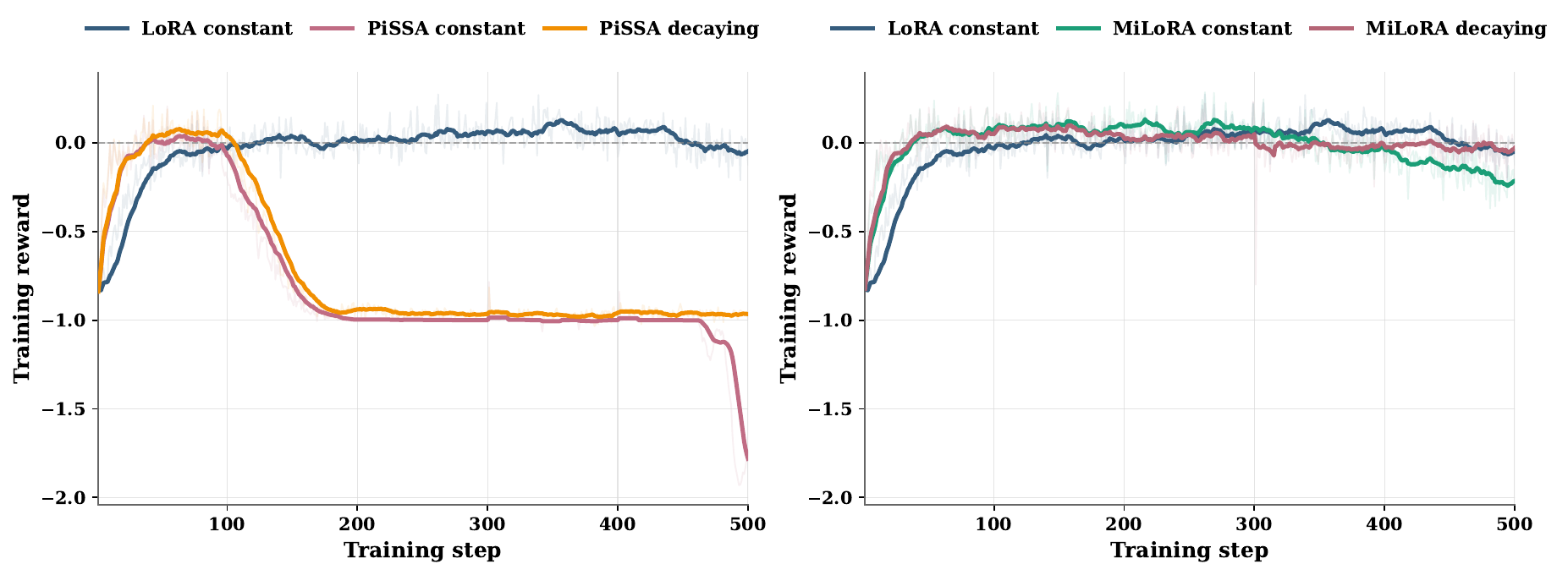}
   \caption{\textbf{Ablation on learning-rate decay for PiSSA and MiLoRA under RLVR training.}                           
   The left panel compares PiSSA with a constant learning rate and a cosine-decaying schedule, together with the LoRA       
 constant-learning-rate baseline; the right panel shows the analogous comparison for MiLoRA.
   Both methods benefit from slower optimization, confirming that enlarged effective updates are a primary source of     
 instability in RLVR training.                            
   Nevertheless, while MiLoRA with learning-rate decay approaches the stable behavior of standard LoRA, PiSSA still      
 exhibits significant late-stage collapse, pointing to an inherently more hazardous optimization direction.} 
    \label{fig:lr decay}
    \vspace{-10pt}
\end{figure}

\paragraph{Learning rate partially controls update magnitude.}
To assess the role of the learning rate in driving instability, we
conduct an ablation comparing constant and decaying schedules,
shown in Figure~\ref{fig:lr decay}. Slowing optimization through
learning rate decay offers partial mitigation for both PiSSA and
MiLoRA: as Figure~\ref{fig:lr decay} reveals, MiLoRA with a decaying
learning rate approaches the stable behavior of standard LoRA.
PiSSA, however, still suffers severe collapse in the later phases of
training, indicating that learning rate decay alone does not resolve
the underlying instability. This points to an inherently vulnerable
optimization landscape for PiSSA: once updates are initialized and
accelerated along
its geometry-informed directions, the optimization trajectory tends to exit the stable optimization regime.

\paragraph{Singular value scaling destabilizes training across singular subspaces.} 
Beyond the choice of subspace, singular value scaling is another
potentially critical factor for optimization stability. To isolate
its effect from subspace selection, we use \textbf{OLoRA} \citep{büyükakyüz2024oloraorthonormallowrankadaptation}  as a controlled
baseline. OLoRA initializes with top-$r$ principal components as follows:
\begin{equation}\label{eq:olora}
    B_0 = U_r, \quad A_0 = V_r^\top.
\end{equation}

Since OLoRA and PiSSA target the same principal subspace
of $W_0$, any difference in stability can be attributed to the
\emph{singular value scaling} inherent to PiSSA. We investigate this
effect from both theoretical and empirical perspectives.

\noindent{\em Theoretical analysis.}
We first present a theorem that quantifies how PiSSA's initialization potentially
induces excessive update magnitudes compared to OLoRA.

\begin{theorem}[PiSSA Gradient Amplification]\label{thm:pissa_amplification}
The first-step weight updates $\Delta W^{\text{PiSSA}}_1$ of PiSSA and $\Delta W^{\text{OLoRA}}_1$ of OLoRA satisfy:
\begin{equation}
    \frac{\|\Delta W^{\text{PiSSA}}_1\|_F}{\|\Delta W^{\text{OLoRA}}_1\|_F} \ge \sigma_r,
\end{equation}
where $\sigma_r$ is the $r$-th largest singular value of $W_0$.
\end{theorem}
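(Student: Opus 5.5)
We compare one step of gradient descent with the same learning rate $\eta$, starting from the same effective weight $W_0$. Both methods start at $W=W_0$ and the frozen residual does not affect gradients, so both see the same loss gradient $G:=\nabla_W\mathcal{L}(W_0)$. For a parameterization $W=W_{\mathrm{res}}+BA$, the chain rule gives $\nabla_B\mathcal{L}=GA_0^\top$ and $\nabla_A\mathcal{L}=B_0^\top G$. Hence the first-step update of the product is
\begin{equation*}
\Delta W_1 = (B_0-\eta GA_0^\top)(A_0-\eta B_0^\top G)-B_0A_0
= -\eta\bigl(GA_0^\top A_0 + B_0B_0^\top G\bigr) + \eta^2 GA_0^\top B_0^\top G .
\end{equation*}
We will work at first order in $\eta$, dropping the $\eta^2$ term, which is the standard regime for such first-step comparisons. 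We will state this assumption explicitly. If exactness is wanted, the $\eta^2$ term can be handled in the same way, since it transforms identically; see below.

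Next we substitute the two initializations. For OLoRA, $A_0=V_r^\top$ and $B_0=U_r$, so the first-order update is $-\eta\,\mathcal{T}_I(G)$, where
\begin{equation*}
\mathcal{T}_D(G):=GV_rDV_r^\top+U_rDU_r^\top G .
\end{equation*}
For PiSSA, $A_0^\top A_0=V_r\Sigma_rV_r^\top$ and $B_0B_0^\top=U_r\Sigma_rU_r^\top$, so the update is $-\eta\,\mathcal{T}_{\Sigma_r}(G)$. The problem therefore reduces to showing $\|\mathcal{T}_{\Sigma_r}(G)\|_F\ge\sigma_r\|\mathcal{T}_I(G)\|_F$.

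To prove this inequality, complete $U_r,V_r$ to orthonormal bases $[U_r,U_\perp]$ and $[V_r,V_\perp]$. Write $G$ in block form with blocks $G_{11}=U_r^\top GV_r$, $G_{12}=U_r^\top GV_\perp$, $G_{21}=U_\perp^\top GV_r$, and $G_{22}$. By unitary invariance of the Frobenius norm, $\mathcal{T}_D(G)$ has block form
\begin{equation*}
\begin{pmatrix} G_{11}D+DG_{11} & DG_{12}\\ G_{21}D & 0\end{pmatrix}.
\end{equation*}
For $D=\Sigma_r$ and $D=I$ we bound block by block, using $\sigma_i\ge\sigma_r$ for $i\le r$:
\begin{itemize}
\item $\|\Sigma_rG_{12}\|_F\ge\sigma_r\|G_{12}\|_F$, since left-multiplying by $\Sigma_r$ scales row $i$ by $\sigma_i$.
\item $\|G_{21}\Sigma_r\|_F\ge\sigma_r\|G_{21}\|_F$, by the same argument on columns.
\item The $(1,1)$ block has entries $(\sigma_i+\sigma_j)(G_{11})_{ij}$ versus $2(G_{11})_{ij}$, and $\sigma_i+\sigma_j\ge 2\sigma_r$.
\end{itemize}
Summing the squared block norms gives the claim. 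The $\eta^2$ term $GA_0^\top B_0^\top G$ becomes $GV_r\Sigma_rU_r^\top G$ for PiSSA and $GV_rU_r^\top G$ for OLoRA, so it carries a middle $\Sigma_r$ factor as well. Only a first-order statement is cleanly supported, though: combining the two terms under the norm does not preserve the inequality in general, because of cross terms.

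The main obstacle is therefore justifying the first-order truncation. I will present the theorem under the small-$\eta$, linearized update, as the paper's "first-step update" phrasing suggests. The entrywise scaling in the $(1,1)$ block is the only place requiring care, since there both sides of the factorization contribute and the bound relies on $(\sigma_i+\sigma_j)/2\ge\sigma_r$.
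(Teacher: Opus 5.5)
Your proposal is correct and follows essentially the same route as the paper: you expand the first-step factor update to leading order as $-\eta(B_0B_0^\top G_0 + G_0A_0^\top A_0)$ and compare PiSSA against OLoRA coefficient by coefficient in the singular bases using $\sigma_i\ge\sigma_r$ for retained indices. Your block entries $\sigma_i+\sigma_j$, $\sigma_i$, $\sigma_j$ versus $2,1,1$ are exactly the paper's $c_{ij}^{\mathrm{PiSSA}}$ versus $c_{ij}^{\mathrm{OLoRA}}$, and your version does not need $G_0=gx^\top$ to be rank one. Like the paper, which only concludes $\liminf_{\eta\to 0}$ of the ratio is at least $\sigma_r$ when the leading OLoRA term is nonzero, you prove the leading-order statement; you should drop your early remark that the $\eta^2$ term ``can be handled in the same way,'' since, as you later note yourself, it cannot.
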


The proof is deferred to Appendix~\ref{app:proof_pissa_olora_amplification}. Although PiSSA and OLoRA share the same principal subspace, PiSSA scales each retained singular mode by the corresponding singular value, amplifying the update norm by at least $\sigma_r$ to leading order. For pretrained LLMs, singular values typically follow a heavy-tailed distribution with $\sigma_r \gg 1$ at moderate rank $r$ (see Figure~\ref{fig:singular_values} in Appendix). Theorem~\ref{thm:pissa_amplification} therefore implies that PiSSA inflates weight updates by a substantial factor relative to OLoRA, increasing the risk of exceeding the implicit KL budget in RLVR.

To proceed, we invoke a result from \citet{zhu2025pathtakenrlvrprovably}, which shows that enforcing a KL constraint
\(D_{\mathrm{KL}}(\pi_{\theta^{+}} \,\|\, \pi_\theta)\) keeps the updated policy close to the current policy and, in turn, bounds the magnitude of the corresponding weight change.

\begin{theorem}[KL constraint implies weight bound {\citep[Gate I]{zhu2025pathtakenrlvrprovably}}]
\label{prop:kl_leash}
Assume $\log \pi_\theta$ is $C^3$, 
where $F(\theta)$ denotes the Fisher information matrix \footnote{
Here $C^3$ means having continuous derivatives up to order $3$.
The Fisher information matrix is defined as
$F(\theta)=\mathbb{E}_{x\sim \mathcal{D},\,y\sim \pi_\theta(\cdot|x)}
[\nabla_\theta \log \pi_\theta(y|x)\nabla_\theta \log \pi_\theta(y|x)^\top]$.
}.
Consider a single-step update of the model parameters from $\theta$ to $\theta^{+}$, so that each weight matrix $W \subset \theta$ is updated to $W + \Delta W$. Suppose that the KL divergence term in \eqref{eq:kl_constraint} $D_{\mathrm{KL}}(\pi_{\theta^{+}} \,\|\, \pi_\theta) \le K$ and that, on the update subspace, $F(\theta) \succeq \mu I$ for some $\mu > 0$. Then, for $K$ sufficiently small, every weight block $W \subset \theta$ satisfies $\|\Delta W\|_{F} \;\le\; \sqrt{\frac{2K}{\mu}}\,\bigl(1+o(1)\bigr).$
\end{theorem}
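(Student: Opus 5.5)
The approach is a second-order Taylor expansion of the KL divergence around $\theta$, followed by the Fisher lower bound on the update subspace. Write $\delta = \theta^{+}-\theta$ for the stacked parameter update and set $\phi(\delta) := D_{\mathrm{KL}}(\pi_{\theta+\delta}\,\|\,\pi_\theta)$. Since $\log\pi_\theta$ is $C^3$, $\phi$ is $C^3$ in a neighborhood of $0$, so
\[
\phi(\delta)=\phi(0)+\nabla\phi(0)^\top\delta+\tfrac12\,\delta^\top\nabla^2\phi(0)\,\delta+R(\delta),\qquad |R(\delta)|\le C\|\delta\|^3 .
\]

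The first step is the three standard identities for the expansion at $\delta=0$.
\begin{itemize}
\item $\phi(0)=0$.
\item $\nabla\phi(0)=0$. This holds because $\phi\ge 0$ has a minimum at $0$; directly, it reduces to the score having mean zero, $\mathbb{E}_{y\sim\pi_\theta}[\nabla\log\pi_\theta]=0$.
\item $\nabla^2\phi(0)=F(\theta)$. Differentiate $\sum_y \pi_{\theta+\delta}\bigl(\log\pi_{\theta+\delta}-\log\pi_\theta\bigr)$ twice at $\delta=0$. The terms containing $\log\pi_{\theta+\delta}-\log\pi_\theta$ vanish, as do terms of the form $\sum_y\nabla^2\pi_\theta$, which leaves $\mathbb{E}[\nabla\log\pi\,\nabla\log\pi^\top]$.
\end{itemize}
The prompt expectation and the sum over tokens in \eqref{eq:kl_constraint} pass through linearly. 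The token-level sum equals the sequence-level KL by the chain rule, so the identity holds for the Fisher information as defined in the footnote. Altogether, $\phi(\delta)=\tfrac12\delta^\top F(\theta)\delta+O(\|\delta\|^3)$.

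The second step uses the hypothesis $F(\theta)\succeq\mu I$ on the update subspace, where $\delta$ lives. It gives
\[
K\ge \phi(\delta)\ge \tfrac{\mu}{2}\|\delta\|^2 - C\|\delta\|^3 = \tfrac{\mu}{2}\|\delta\|^2\bigl(1-\tfrac{2C}{\mu}\|\delta\|\bigr).
\]
Hence $\|\delta\|\le\sqrt{2K/\mu}\,(1-\tfrac{2C}{\mu}\|\delta\|)^{-1/2}$. Once $\|\delta\|\to 0$ as $K\to 0$, the correction factor is $1+O(\|\delta\|)=1+O(\sqrt K)=1+o(1)$. Finally, every block satisfies $\|\Delta W\|_F\le\|\delta\|_2$, because the Frobenius norm of a block is part of the Euclidean norm of the stacked vector. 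This yields the claim.

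The main obstacle is justifying that small $K$ forces small $\|\delta\|$, which is what keeps the cubic remainder subordinate. The quadratic lower bound is only local, so a large $\delta$ with small KL is not excluded a priori. I would handle this by restricting to updates in a fixed ball $\|\delta\|\le\rho$, with $\rho<\mu/(4C)$ so that $\phi(\delta)\ge\tfrac{\mu}{4}\|\delta\|^2$ there. This is the regime of single small steps that the statement implicitly considers. Within that ball, $\phi\le K$ gives $\|\delta\|\le 2\sqrt{K/\mu}\to0$, and the bootstrap above then sharpens the constant to $\sqrt{2K/\mu}(1+o(1))$. I would state this locality explicitly as the meaning of ``$K$ sufficiently small,'' consistent with the cited Gate I of \citet{zhu2025pathtakenrlvrprovably}.
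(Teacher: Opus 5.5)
Your proposal is correct and follows the argument behind this result: the paper does not prove the statement itself but imports it from Gate~I of \citet{zhu2025pathtakenrlvrprovably}, where the bound comes from the second-order expansion $D_{\mathrm{KL}}(\pi_{\theta+\delta}\,\|\,\pi_\theta)=\tfrac12\,\delta^\top F(\theta)\,\delta+O(\|\delta\|^3)$ (which is why $C^3$ is assumed), the Fisher lower bound $F(\theta)\succeq\mu I$ on the update subspace, and $\|\Delta W\|_F\le\|\delta\|_2$. Your explicit restriction to a fixed ball $\|\delta\|\le\rho$ is a worthwhile clarification, because a small KL alone cannot exclude a large update, and this restriction is what makes ``$K$ sufficiently small'' rigorous.
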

Theorem~\ref{prop:kl_leash} shows that when the KL divergence is constrained by $K$, a corresponding bound is imposed on every weight update, $\|\Delta W\|_F \lesssim \sqrt{2K/\mu}$. Consequently, an excessively large $\|\Delta W\|_F$ will drive $D_{\mathrm{KL}}(\pi_{\theta^{+}} \,\|\, \pi_\theta)$ beyond the budget $K$, potentially violating the conservative-update requirement and destabilizing training (see Section~\ref{sec:finetuning_paradigms}). As Theorem~\ref{thm:pissa_amplification} establishes, PiSSA's singular-value scaling amplifies the weight update $\|\Delta W^{\text{PiSSA}}_1\|_F$ along the principal spectral directions relative to OLoRA's $\|\Delta W^{\text{OLoRA}}_1\|_F$, making it more prone to such violation and leaving the safe KL region and therefore more vulnerable to training instability in RLVR.

\noindent{\em Empirical analysis.}
Figure~\ref{fig:ortho} confirms this analysis in the principal singular subspace. In the first column, the top panel reports training reward and the bottom panel
reports KL divergence between the rollout policy and the current policy for LoRA, PiSSA, and OLoRA. Compared with PiSSA, OLoRA incurs substantially smaller KL divergence, which by
Theorem~\ref{prop:kl_leash} implies a correspondingl smaller weight update magnitude. This
reduced policy drift partially mitigates the reward collapse observed in PiSSA.

\begin{figure}[t]
      \centering
      \includegraphics[width=1\linewidth]{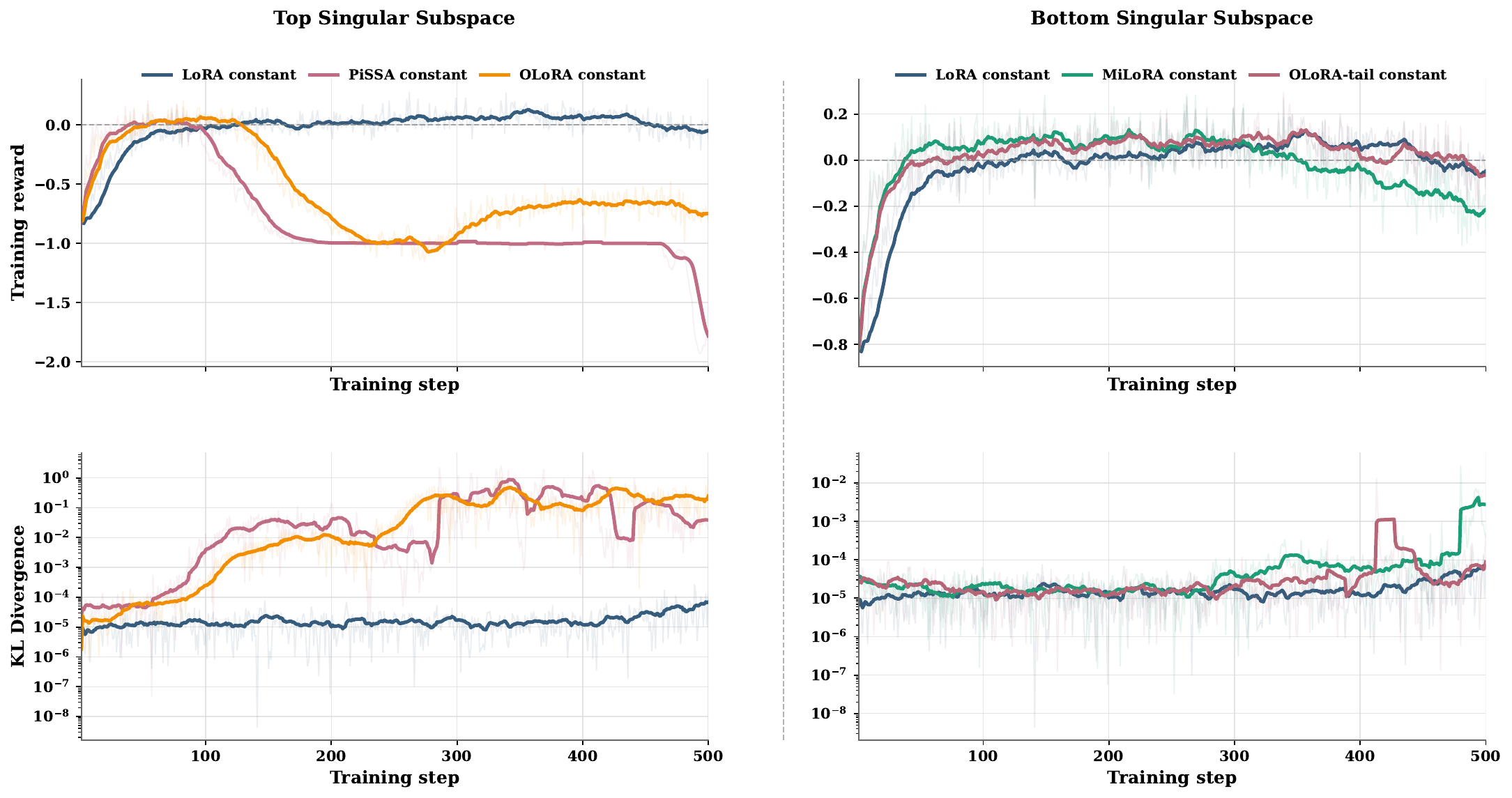} \caption{\textbf{Singular value scaling exacerbates instability beyond subspace selection.}
      We compare constant-learning-rate DAPO training dynamics for LoRA, PiSSA, OLoRA, MiLoRA,
      and OLoRA-tail. The first column compares methods associated with the top singular subspace
      (LoRA, PiSSA, and OLoRA), while the second column compares methods associated with the
      bottom singular subspace (LoRA, MiLoRA, and OLoRA-tail). The first row reports training
      reward, and the second row reports KL divergence between the rollout policy and the current
      policy. Within each subspace, removing singular value scaling, from PiSSA to OLoRA for the
      top subspace and from MiLoRA to OLoRA-tail for the bottom subspace, mitigates reward collapse
      and reduces policy drift. These controlled comparisons show that singular value scaling is
      a distinct source of RLVR instability, beyond the choice of singular subspace.}
      \label{fig:ortho}
  \end{figure}

To verify that this destabilizing effect is not unique to the principal subspace, we extend
the ablation to the minor singular directions via a recent method \textbf{OLoRA-tail} \citep{lab2026scalingpeftmillionpersonal}. As a counterpart to MiLoRA,
  OLoRA-tail targets the same tail singular subspace but removes singular value scaling:
  \begin{equation*}
      B_0 = U_{-r}, \quad A_0 = V_{-r}^\top,
  \end{equation*}
  where \(U_{-r}\) and \(V_{-r}\) denote the last \(r\) columns of \(U\) and \(V\), respectively.
  As shown in the second column of Figure~\ref{fig:ortho}, removing singular value scaling in
  the tail subspace yields markedly more stable dynamics and closely mirrors standard LoRA.

  Taken together, the comparisons in Figure~\ref{fig:ortho} reveal a nuanced interaction
  between subspace geometry and singular value scaling. In the principal singular directions,
  OLoRA delays and weakens the reward collapse observed in PiSSA. In the minor singular
  directions, OLoRA-tail nearly eliminates the instability observed in MiLoRA, yielding
  reward and KL dynamics close to standard LoRA. Thus, although certain singular subspaces
  are more prone to exceeding the KL leash, removing singular value scaling and enforcing
  orthonormality provides a viable path to more stable RLVR training \citep{lab2026scalingpeftmillionpersonal}. This motivates a deeper
  study of LoRA optimization dynamics and a principled initialization strategy that utilizes the geometry of singular subspace while avoiding destabilizing singular value scaling.

\section{Geometry-Preserving Orthonormal Initialization for LoRA}

In this section, we formally analyze the optimization dynamics of LoRA to investigate initialization strategy, studying how initialization can bridge the gap to full fine-tuning while maintaining training stability.  The proof has been deferred to the Appendix \ref{app:proofs_dynamics}. Based on this analysis, we propose orthonormal initialization strategy that preserves the geometry of the pretrained weight space while avoiding singular value scaling.

\subsection{Optimization Dynamics of LoRA}
\label{sec:dynamics}

Since LoRA constrains updates to a rank-$r$ subspace with $r < n$, it often cannot exactly match full fine-tuning. We quantify this approximation gap with respect to the initialization strategy in Theorem~\ref{thm:lora_error}, aiming to reduce the gap as much as possible.

\paragraph{Orthonormal initialization in LoRA leads closer to full fine-tuning.}
Without loss of generality, we consider a general RLVR objective:
\begin{equation}
\mathcal{L}_{\text{RLVR}}(\theta):= 
\mathbb{E}_{x \sim \mathcal{D},\, y \sim \pi_\theta(\cdot|x)}\left[R(x, y)\right] - \beta \cdot \mathrm{KL}(\pi_\theta \| \pi_{\text{ref}}),
    \label{eq:rlvr-general}
\end{equation}
where the reward $R(x, y) \in \{0, 1\}$ equals $1$ if $y$ is a valid solution to $x$ and $0$ otherwise, $\pi_{\text{ref}}$ is some reference policy, and $\beta > 0$ denotes a KL penalty coefficient Following standard LoRA, we let $B_0 = 0$ and consider a more general initialization for $A_0 \in \mathbb{R}^{r \times n}$. Let $T$ denote the total number of training iterations and $t \in \{0, 1, \ldots, T\}$ the current iteration. Consider a input $x$ and a single linear layer with weight matrix $W_t = W_0 + B_t A_t$. The forward pass computes logits $z_t = W_t x$, and the policy $\pi_\theta(y \mid x) = \mathrm{softmax}(z_t)$ gives the probability of generating output $y$, where $\theta$ represent the model parameters, namely $W_t$ at time $t$. At time $t$, the gradients with respect to the two LoRA matrices are $\frac{\partial \mathcal{L}}{\partial A} = B_t^\top G_t, \frac{\partial \mathcal{L}}{\partial B} = G_t A_t^\top$, where $G_t = \nabla_{W_t} \mathcal{L}$ denotes the gradient of the loss with respect to $W_t$.

\begin{assumption}
\label{ass:smooth}
The RLVR loss $\mathcal{L}_{\text{RLVR}}$ from~\eqref{eq:rlvr-general} is $L$-smooth with respect to logits $z = Wx$. The gradient is bounded: $\|G_t\|_F \leq M$ for all $t$.
\end{assumption}
The assumption on the loss holds for many RLVR objectives with binary rewards, with a detailed clarification provided in Appendix~\ref{app:smooth}. The assumption on the gradient is typically enforced via the clipped importance-ratio mechanism, which is widely used in DAPO, GRPO, and other RL algorithms.

\begin{theorem}[LoRA Approximation Error]
\label{thm:lora_error}
Let $W^{\mathrm{full}}_T$ and $W^{\mathrm{LoRA}}_T$ be the weights after $T$ steps of full fine-tuning and LoRA under RLVR. 
Under Assumption~\ref{ass:smooth}, define
$\Gamma_T(\eta)
\coloneqq
\frac{1}{T}\sum_{s=0}^{T-1}
\left(1+L\eta\|x\|_2^2\right)^s$, then for a fixed training horizon $T$ and sufficiently small $\eta$,
\begin{align}\label{eq:lora_error}
\frac{1}{T}\bigl\|W^{\mathrm{LoRA}}_T - W^{\mathrm{full}}_T\bigr\|_F &\le  
M\eta\,\Gamma_T(\eta)\,\bigl\|I_n-A_0^\top A_0\bigr\|_2  +\mathcal{O}(\eta^2).
\end{align}
Furthermore, for $A_0 \in \mathbb{R}^{r \times n}$ with $r < n$, 
$\bigl\| I_n - A_0^\top A_0 \bigr\|_2 \geq 1$
holds with equality if $A_0$ has orthonormal rows, i.e., $A_0 A_0^\top = I_r$.
\end{theorem}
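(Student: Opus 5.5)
Both trajectories start from $W_0$ with step size $\eta$. Full fine-tuning follows $W^{\mathrm{full}}_{t+1}=W^{\mathrm{full}}_t-\eta G^{\mathrm{full}}_t$. LoRA updates $B_{t+1}=B_t-\eta G_tA_t^\top$ and $A_{t+1}=A_t-\eta B_t^\top G_t$, with $B_0=0$. The plan is to write the LoRA weight increment to first order in $\eta$, compare it step by step with the full fine-tuning increment, and unroll the resulting error recursion. The geometric sum $\Gamma_T(\eta)$ will come from propagating the per-step mismatch through $L$-smoothness.

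\textbf{Step 1 (effective LoRA update).} Expanding the product,
\[
B_{t+1}A_{t+1}-B_tA_t=-\eta\bigl(G_tA_t^\top A_t+B_tB_t^\top G_t\bigr)+\eta^2 G_tA_t^\top B_t^\top G_t.
\]
Since $B_0=0$, we have $\|B_t\|_F=\mathcal{O}(t\eta M\|A_0\|)$ and $\|A_t-A_0\|_F=\mathcal{O}(t\eta^2 M^2\|A_0\|)$. For fixed $T$ and small $\eta$, both $B_tB_t^\top G_t$ and the replacement of $A_t$ by $A_0$ therefore cost only $\mathcal{O}(\eta^2)$ per step. This gives
\[
W^{\mathrm{LoRA}}_{t+1}=W^{\mathrm{LoRA}}_t-\eta\,G^{\mathrm{LoRA}}_t A_0^\top A_0+\mathcal{O}(\eta^2),
\]
so to first order LoRA is gradient descent preconditioned on the right by $P:=A_0^\top A_0$.

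\textbf{Step 2 (error recursion).} Set $E_t=W^{\mathrm{LoRA}}_t-W^{\mathrm{full}}_t$. Then
\[
E_{t+1}=E_t-\eta\bigl(G^{\mathrm{LoRA}}_t-G^{\mathrm{full}}_t\bigr)P+\eta\,G^{\mathrm{full}}_t(I_n-P)+\mathcal{O}(\eta^2).
\]
\begin{itemize}
\item The last explicit term has norm at most $\eta M\|I_n-P\|_2$.
\item For the gradient difference, write $G=\nabla_z\mathcal{L}\,x^\top$. By $L$-smoothness in the logits, $\|\nabla_z\mathcal{L}(W x)-\nabla_z\mathcal{L}(W'x)\|\le L\|(W-W')x\|$. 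Hence $\|G^{\mathrm{LoRA}}_t-G^{\mathrm{full}}_t\|_F\le L\|x\|_2^2\|E_t\|_F$.
\end{itemize}
Handling the factor $P$ in the middle term is the delicate part. I would bound it by $\|P\|_2$. For orthonormal rows $\|P\|_2=1$. In general the excess $\|P\|_2-1$ can be absorbed by noting the bound is stated at first order, or by using $\|P\|_2\le 1+\|I_n-P\|_2$. Either way, up to $\mathcal{O}(\eta^2)$ terms,
\[
\|E_{t+1}\|_F\le(1+L\eta\|x\|_2^2)\|E_t\|_F+\eta M\|I_n-P\|_2.
\]

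\textbf{Step 3 (unrolling).} With $E_0=0$, unrolling gives
\[
\|E_T\|_F\le \eta M\|I_n-P\|_2\sum_{s=0}^{T-1}(1+L\eta\|x\|_2^2)^s+\mathcal{O}(\eta^2).
\]
Dividing by $T$ yields exactly the bound in \eqref{eq:lora_error}.

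\textbf{Step 4 (the lower bound and the equality case).} Because $r<n$, $P=A_0^\top A_0$ has rank at most $r$. Choose a unit vector $v\in\ker A_0$. Then $v^\top(I_n-P)v=1$, so $\|I_n-P\|_2\ge1$.

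For equality, suppose $A_0A_0^\top=I_r$. Then $P^2=A_0^\top(A_0A_0^\top)A_0=P$ and $P$ is symmetric, so $P$ is an orthogonal projection. Hence $I_n-P$ is also an orthogonal projection, and it is nonzero since $r<n$, so its operator norm is $1$.

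\textbf{Main obstacle.} The expected difficulty is making the $\mathcal{O}(\eta^2)$ bookkeeping in Steps 1–2 rigorous. This means controlling the growth of $B_t$ and the drift $A_t-A_0$ uniformly over the fixed horizon $T$, using the gradient bound $M$ from Assumption~\ref{ass:smooth}. It also requires justifying that the factor $\|P\|_2$ in the propagated term does not spoil the first-order constant.
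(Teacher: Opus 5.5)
Your proposal is correct and follows the paper's proof essentially step for step: the first-order LoRA increment $-\eta G^{\mathrm{LoRA}}_tA_0^\top A_0$ obtained from $B_0=0$, a one-step error recursion via $L$-smoothness in the logits, unrolling from $E_0=0$, and the rank and orthogonal-projector argument for $\|I_n-A_0^\top A_0\|_2\ge 1$ with equality. The only real difference is your Step 2 split, which is what creates the $\|P\|_2$ difficulty you flag as the main obstacle. The paper instead writes $-\eta G^{\mathrm{LoRA}}_tP+\eta G^{\mathrm{full}}_t=\eta G^{\mathrm{LoRA}}_t(I_n-P)+\eta(G^{\mathrm{full}}_t-G^{\mathrm{LoRA}}_t)$, so the gradient difference carries no $P$, the factor $(1+L\eta\|x\|_2^2)$ appears directly, and $M$ is applied to the LoRA gradient, which is the gradient the assumption actually bounds. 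Your first fix, absorbing the excess at first order, also works, because $\|E_t\|_F=\mathcal{O}(\eta)$ over a fixed horizon; your second fix, $\|P\|_2\le 1+\|I_n-P\|_2$, would not by itself give the stated factor.
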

Theorem~\ref{thm:lora_error} shows that when the learning rate $\eta$ is sufficiently small, the approximation error between LoRA and full fine-tuning is controlled by $\Gamma_T(\eta)\|I_n - A_0^\top A_0\|_2$. The factor $\Gamma_T(\eta)$ is independent of the initialization. Thus, orthonormal initialization of $A_0$ minimizes the initialization-dependent term to $1$.

With standard LoRA initialization $B_0 = 0$, Theorem~\ref{thm:lora_error} implies that orthonormal initialization $A_0$ minimizes the approximation gap between LoRA and full fine-tuning in RLVR, making LoRA’s performance closer to that of full fine-tuning.

\paragraph{Orthonormal initialization stabilizes RLVR training.} Beyond minimizing the gap between LoRA and full fine-tuning, orthonormal initialization also controls the update magnitude at each step, which is critical for stable training under RLVR.

\begin{proposition}[Bounded Weight Updates for Orthonormal Initialization]
\label{prop:ortho_bounded}
For the LoRA parameterization $\Delta W^{\mathrm{LoRA}} = BA$ with $B_0=0$ and row-orthonormal $A_0$, the first-step LoRA update satisfies
\begin{equation}
\bigl\|\Delta W^{\mathrm{LoRA}}_1\bigr\|_F
= \eta\,\bigl\|G_0A_0^\top A_0\bigr\|_F
\le \eta\,\|G_0\|_F,
\end{equation}
where $G_0=\nabla_{W_0}\mathcal{L}$.
\end{proposition}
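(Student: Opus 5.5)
We write out one step of gradient descent on the LoRA parameters from $(A_0,B_0)=(A_0,0)$ with learning rate $\eta$, using the gradient formulas stated before Assumption~\ref{ass:smooth}. Those formulas give $\partial\mathcal{L}/\partial A = B_0^\top G_0 = 0$ and $\partial\mathcal{L}/\partial B = G_0A_0^\top$. Hence $A_1 = A_0$ and $B_1 = B_0 - \eta G_0 A_0^\top = -\eta G_0A_0^\top$; under the ascent convention only the sign flips. The first-step update is therefore
\begin{equation*}
\Delta W^{\mathrm{LoRA}}_1 = B_1A_1 - B_0A_0 = -\eta\, G_0A_0^\top A_0 .
\end{equation*}
Taking Frobenius norms gives the equality in the statement. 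There is no second-order cross term $\eta^2(\cdot)(\cdot)$, because the $A$-gradient vanishes exactly when $B_0=0$. This is the one point to verify carefully, and it is immediate.

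For the inequality, set $P := A_0^\top A_0 \in \mathbb{R}^{n\times n}$. Row-orthonormality $A_0A_0^\top = I_r$ gives $P^2 = A_0^\top(A_0A_0^\top)A_0 = P$ and $P^\top = P$. So $P$ is the orthogonal projector onto the row space of $A_0$, and $\|P\|_2 = 1$ (since $r\ge 1$, $P\neq 0$).

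Next we use $\|G_0P\|_F \le \|G_0\|_F\|P\|_2$. This follows row by row: for each row $g_i^\top$ of $G_0$, $\|P g_i\|_2 \le \|g_i\|_2$. Summing the squares yields $\|G_0A_0^\top A_0\|_F \le \|G_0\|_F$, which completes the argument. One can also obtain it directly from Pythagoras, $\|G_0\|_F^2 = \|G_0P\|_F^2 + \|G_0(I-P)\|_F^2$.

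The only delicate point is bookkeeping rather than estimation. We must confirm that the update counted in $\Delta W^{\mathrm{LoRA}}_1$ is exactly $B_1A_1 - B_0A_0$, with any LoRA scaling factor absorbed into $\eta$ as in the statement. We must also confirm that the simultaneous update of $A$ and $B$ uses gradients evaluated at $(A_0,B_0)$, so that $A_1=A_0$. With these conventions fixed, the proof is the two-line computation above.
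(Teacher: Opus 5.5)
Your proposal is correct and follows the same route as the paper: derive $\Delta W^{\mathrm{LoRA}}_1=-\eta G_0A_0^\top A_0$ from one gradient step with $B_0=0$ (the paper's Lemma~\ref{lem:first_step}), then bound $\|G_0A_0^\top A_0\|_F\le\|G_0\|_F\|A_0^\top A_0\|_2=\|G_0\|_F$ using that $A_0^\top A_0$ is an orthogonal projector. Your row-by-row and Pythagoras justifications of the mixed-norm inequality are valid elaborations of the step the paper cites directly.
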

This result shows that, at the first iteration, orthonormal initialization guarantees that the LoRA weight update $\bigl\|\Delta W^{\mathrm{LoRA}}_1\bigr\|_F$ does not exceed that of full fine-tuning in Frobenius norm. This controlled first-step behavior mitigates abrupt policy shifts and supports stable RLVR fine-tuning.

\subsection{Geometry-Preserving Orthonormal Initialization for LoRA}

The theoretical insights above indicate that orthonormal initialization of $A_0$ is beneficial for low-rank fine-tuning in RLVR, both in minimizing the gap to full fine-tuning and in ensuring bounded weight updates. Motivated by this, we propose two initialization schemes that enforce orthonormality of $A_0$ while setting $B_0 = \mathbf{0}_{m \times r}$, illustrated in Figure~\ref{fig:visual} alongside comparisons to prior works. To preserve the geometric structure of the pretrained weight matrix $W_0$, both schemes are derived from its SVD.

\begin{itemize}
    \item \textbf{Principal orthonormal initialization (\RLPO).}
We initialize the adapter $A_0$ using the principal singular vectors:
\vspace{-4pt}
\begin{equation*}
      B_0 = \mathbf{0}_{m \times r}, \quad A_0 = V_r^\top,
\end{equation*}
where $V_r \in \mathbb{R}^{n \times r}$ contains the top-$r$ right singular vectors of $W_0$. This preserves the geometric information of the pretrained model by aligning the adapter with the principal directions of $W_0$. The design is similar in spirit to PiSSA, as both retain the top-$r$ singular directions of $W_0$; however, PiSSA additionally incorporates the singular value scaling.

    \item \textbf{Minor orthonormal initialization (\RLMO).}
Analogously, we define an initialization targeting the minor subspace:
\begin{equation*}
    B_0 = \mathbf{0}_{m \times r}, \quad A_0 = V_{-r}^\top,
\end{equation*}
where $V_{-r} \in \mathbb{R}^{n \times r}$ consists of the bottom-$r$ right singular vectors of $W_0$. While MiLoRA also targets the minor subspace, it incorporates singular-value scaling and a nonzero $B_0$; in contrast, \RLMO adopts an orthonormal $A_0$ with $B_0 = \mathbf{0}$.
\end{itemize}

\section{Experiments and Analysis}

In this section, we conduct experiments to validate our theoretical findings across various benchmarks.
\paragraph{Experimental setup.}
We fine-tune DeepSeek-R1-Distill-Qwen-1.5B using DAPO~\citep{yu2026dapo} on DAPO-Math-17k \citep{yu2026dapo} with rank $r=16$ LoRA applied to all linear layers. We compare standard LoRA, PiSSA, MiLoRA,  \RLPO, and \RLMO across five mathematical reasoning benchmarks: GSM8K~\citep{cobbe2021gsm8k} (1,319 samples), MATH500~\citep{hendrycks2021measuringmathematicalproblemsolving} (500 samples, mean@4), and AIME 2022/2023/2024 (30 samples each, mean@32)~\citep{aime24}. Full experimental details are provided in Appendix~\ref{app:experimental_details}.

\begin{table*}[t]
      \centering
      \renewcommand{\arraystretch}{0.70}
      \begin{tabular*}{\textwidth}{@{\extracolsep{\fill}}lccccc@{}}
          \toprule
          & LoRA & MiLoRA & \RLMO (Ours) & PiSSA & \RLPO (Ours) \\
          \midrule
          $B_0$
          & $\mathbf{0}$
          & $U_{-r}\Sigma_{-r}^{1/2}$
          & $\mathbf{0}$
          & $U_r\Sigma_{r}^{1/2}$
          & $\mathbf{0}$ \\

          $A_0$
          & $\mathcal{N}(0, \frac{1}{n})$
          & $\Sigma_{-r}^{1/2}V_{-r}^\top$
          & $V_{-r}^\top$
          & $\Sigma_{r}^{1/2}V_r^\top$
          & $V_r^\top$ \\
          \midrule
          GSM8K$^{@1}$
          & $75.64_{\pm 1.25}$
          & $75.41_{\pm 1.35}$
          & $\mathbf{76.42}_{\pm 0.76}$
          & $9.65_{\pm 8.20}$
          & $75.59_{\pm 1.14}$ \\

          MATH500$^{@4}$
          & $81.93_{\pm 7.56}$
          & $76.47_{\pm 6.79}$
          & $86.80_{\pm 2.31}$
          & $13.20_{\pm 9.85}$
          & $\mathbf{87.33}_{\pm 1.81}$ \\

          AIME22$^{@32}$
          & $43.33_{\pm 6.67}$
          & $28.89_{\pm 1.92}$
          & $42.22_{\pm 1.92}$
          & $0.00_{\pm 0.00}$
          & $\mathbf{46.67}_{\pm 3.33}$ \\

          AIME23$^{@32}$
          & $38.89_{\pm 5.09}$
          & $30.00_{\pm 3.33}$
          & $41.11_{\pm 5.09}$
          & $0.00_{\pm 0.00}$
          & $\mathbf{42.22}_{\pm 6.94}$ \\

          AIME24$^{@32}$
          & $72.22_{\pm 3.85}$
          & $47.78_{\pm 6.94}$
          & $72.22_{\pm 1.92}$
          & $0.00_{\pm 0.00}$
          & $\mathbf{73.33}_{\pm 0.00}$ \\
          \midrule
          Avg
          & $62.40_{\pm 2.96}$
          & $51.71_{\pm 0.98}$
          & $63.76_{\pm 1.64}$
          & $4.57_{\pm 3.26}$
          & $\mathbf{65.03}_{\pm 0.55}$ \\
          \bottomrule
      \end{tabular*}
      \vspace{4pt}
      \caption{Comparisons of SVD-based LoRA initialization methods with cosine learning rate decay. Results are reported as mean$_{\pm \mathrm{std}}$.}
      \label{tab:svd_comparison}
  \end{table*}

\begin{figure}[t]
    \centering
    \includegraphics[width=0.7\linewidth]{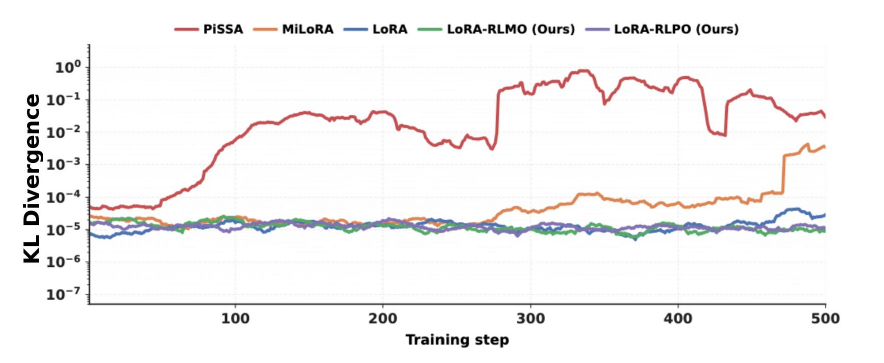}
    \caption{KL divergence during training for different initialization methods. PiSSA shows the highest KL divergence, MiLoRA exhibits intermediate KL growth, and LoRA remains relatively stable. Our proposed methods (\RLMO and \RLPO) maintain the lowest KL trajectories overall, indicating improved training stability.}
    \label{fig:kl_comparison}
\end{figure}

\begin{figure}[t]
    \centering
    \includegraphics[width=0.8\linewidth]{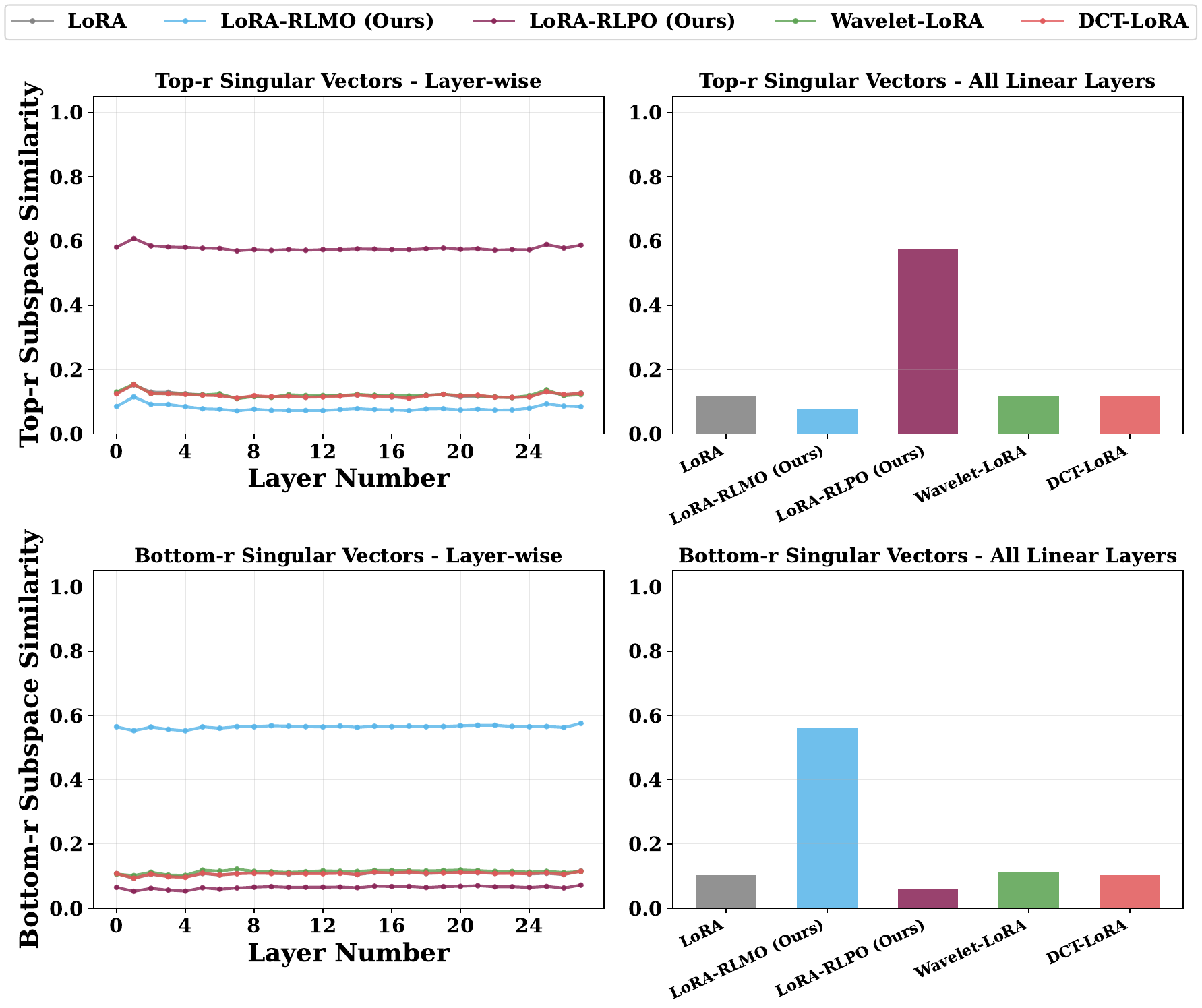}
    \caption{Subspace similarity between learned adapters and singular vectors of pretrained weights $W_0$. Top row: similarity with principal (top-$r$) right singular vectors. Bottom row: similarity with minor (bottom-$r$) right singular vectors. Left column shows per-layer similarity; right columns show averages over all linear layers.}
    \label{fig:subspace_similarity}
\end{figure}

\begin{figure}[t]
    \centering
    \includegraphics[width=1\linewidth]{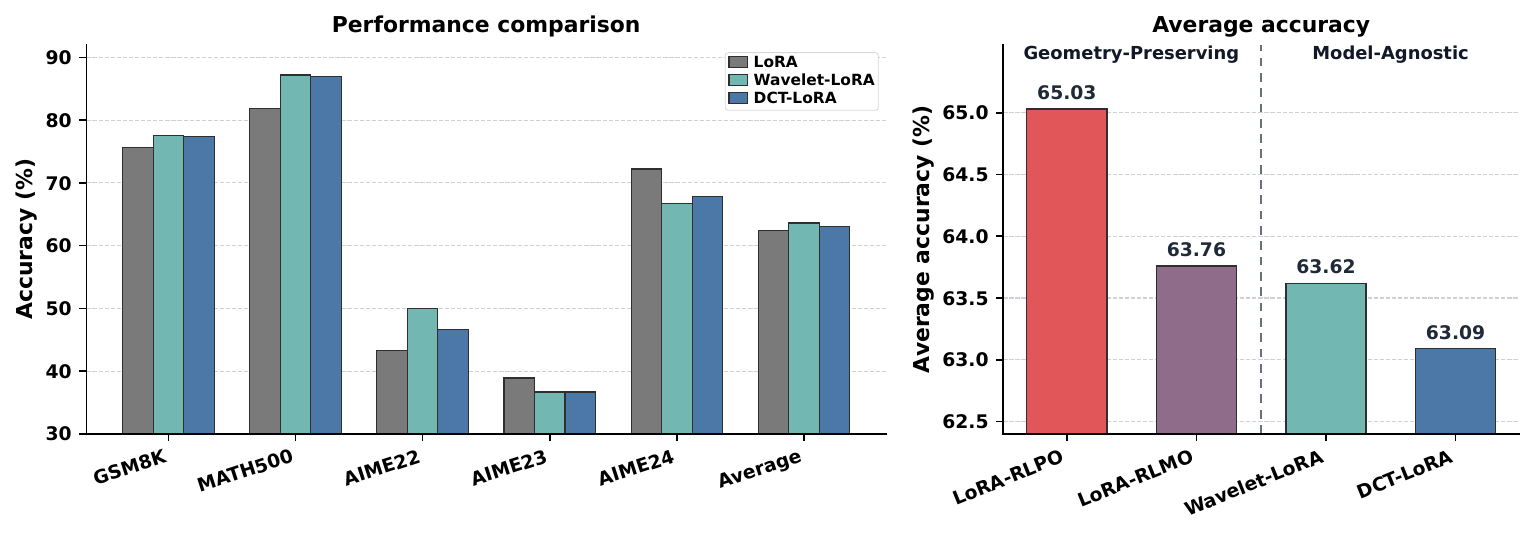}
   \caption{Left: Performance comparison of orthonormal LoRA variants with standard LoRA. Right: Average accuracy comparison between geometry-preserving and model-agnostic methods.}
   \label{fig:orthonormal_comparison}
\end{figure}

\paragraph{Performance.}
Table~\ref{tab:svd_comparison} compares SVD-based LoRA initialization methods at step 500 across five mathematical reasoning
  benchmarks, reporting the mean and standard deviation over multiple seeds. Overall, \RLPO achieves the highest average accuracy
  ($65.03_{\pm 0.55}\%$), followed by \RLMO ($63.76_{\pm 1.64}\%$) and LoRA ($62.40_{\pm 2.96}\%$). \RLPO obtains the best
  performance on MATH500, AIME22 , AIME23, and AIME24
 , while \RLMO achieves the strongest GSM8K result. MiLoRA consistently trails
  behind LoRA on average, and PiSSA performs substantially worse than the other methods in this setting. These results show
  that our geometry-preserving orthonormal initializations improve the stability and effectiveness of SVD-informed LoRA
  variants for RLVR.

\paragraph{Training stability.}
Figure~\ref{fig:kl_comparison} monitors and compares the KL divergence term $  D_{\mathrm{KL}}(\pi_{\theta^+}\,\|\,\pi_\theta)$ in \eqref{eq:kl_constraint} during training for different initialization methods. PiSSA shows the largest KL divergence by a wide margin, while MiLoRA exhibits intermediate KL growth. LoRA, \RLMO, and \RLPO remain in a low-KL regime throughout training. Among them, our proposed methods are particularly stable, with KL trajectories that are consistently comparable to, and even lower than that of standard LoRA. Together with their stronger final evaluation results, these observations indicate that geometry-preserving initialization supports both stable optimization and improved downstream performance in RLVR.

\paragraph{Ablation: orthonormality versus SVD geometry.}
The theoretical analysis implies that an orthonormal $A_0$ minimizes the approximation error to full fine-tuning. To disentangle the two ingredients of the proposed \RLPO and \RLMO: orthonormality and SVD-based geometry information, and determine whether orthonormality alone improves RLVR training or the geometry information also contributes, we introduce two model-agnostic baselines: \textsf{DCT-LoRA} and \textsf{Wavelet-LoRA}. Both of them satisfy the orthonormality condition $A_0 A_0^\top = I_r$ but use no information from the pretrained weight matrix $W_0$. \textsf{DCT-LoRA} initializes $B_0 = 0$ and $A_0 = D_r$, where
$D \in \mathbb{R}^{n \times n}$ is the orthonormal Discrete Cosine
Transform (DCT) matrix~\citep{ahmed2006discrete}. Its entries are
\begin{equation*}
    D_{ij} =
    \alpha_i
    \cos\left(\frac{\pi(2j+1)i}{2n}\right),
    \qquad
    \alpha_i =
    \begin{cases}
    \sqrt{1/n}, & i=0,\\
    \sqrt{2/n}, & i=1,\ldots,n-1,
    \end{cases}
\end{equation*}
for $i,j=0,\ldots,n-1$. Here, $D_r \in \mathbb{R}^{r\times n}$
denotes the first $r$ rows of $D$.  \textsf{Wavelet-LoRA} initializes \(B_0 = 0\) and constructs \(A_0\) as a
  row-orthonormal matrix from a Haar-wavelet-transformed random basis. Let
  \(G \in \mathbb{R}^{r\times n}\) be Gaussian and let \(\mathcal{H}\) denote
  the row-wise Haar wavelet transform. With
  \(Q R = \mathcal{H}(G)^\top\), we set \(A_0=Q^\top\), so that
  \(A_0A_0^\top=I_r\).

We first verify that \RLPO and \RLMO indeed leverage and preserve the geometric structure of the pretrained weights by steering the optimization toward a specific singular subspace, whereas the model-agnostic variants DCT-LoRA and Wavelet-LoRA do not. To this end, we measure the subspace similarity between the learned  model parameter $A$ and the singular vectors of pretrained matrix $W_0$.

For each layer, we compute the similarity as $\|A V\|_F / \|A\|_F$ and report the average across all layers. As shown in Figure~\ref{fig:subspace_similarity}, \RLPO maintains high similarity with the principal singular vectors throughout training, consistent with its initialization from $V_r$, and \RLMO likewise maintains high similarity with the minor singular vectors. In contrast, DCT-LoRA and Wavelet-LoRA exhibit uniformly low similarity across all singular subspaces, confirming that these model-agnostic bases do not exploit the pretrained weight geometry.

Furthermore, Figure~\ref{fig:orthonormal_comparison} (left) shows that both DCT-LoRA and Wavelet-LoRA outperform standard LoRA on most benchmarks, confirming that orthonormality alone improves RLVR training. Figure~\ref{fig:orthonormal_comparison} (right) shows that \RLMO achieves performance comparable to the model-agnostic methods, while \RLPO significantly outperforms all others, indicating that principal-subspace alignment provides substantial benefit beyond orthonormality alone. Thus, both orthonormality and SVD geometry contribute to the strong performance of our methods, with the latter yielding a particularly large gain for \RLPO.

These results also suggest that learning in the principal subspace is not inherently harmful in RLVR, but it is considerably more fragile. Although both \RLPO and PiSSA are initialized in the principal singular subspace, only \RLPO remains stable and achieves the best overall performance (Table~\ref{tab:svd_comparison}). Subspace choice alone therefore does not determine the outcome; rather, our results indicate that geometry-informed orthonormal initialization combined with cosine learning-rate decay is sufficient to make principal-subspace learning both stable and effective in RLVR.

\paragraph{Further ablation study.}
We provide additional ablation study regarding the proposed initialization strategy in Appendix~\ref{app:additional_experiments}. First, regarding the recent finding in~\citet{yin2025evaluatingparameterefficientmethods}, we revisits the failure modes of PiSSA and MiLoRA in Appendix~\ref{app:pissa_analysis}: we show that MiLoRA's initial update is not negligible in norm (see Table~\ref{tab:init_norms}), and that the tail singular spectrum remains nonzero (see Figure~\ref{fig:singular_values}), suggesting that its instability is not induced by near-zero tail initialization, differ from the finding in \citet{yin2025evaluatingparameterefficientmethods} that may be of independent interest to readers. Second, we evaluates the generalization of the proposed geometry-preserving initializations methods beyond the main 1.5B model on mathematical reasoning tasks in Appendix~\ref{app:svd_cost}, including additional task domain code-generation, addtional model families Llama 3.2-3B-Instruct model (Table~\ref{tab:code_gen_llama}), larger-size Qwen2.5-7B-Instruct model (Table~\ref{tab:qwen_7b_results}), learning-rate sensitivity (Figure~\ref{fig:lr_sweep}), and one-time SVD preprocessing cost (Table~\ref{tab:svd_cost}). Finally, we demonstrate that the proposed methods also improve beyond RL fine-tuning frameworks to supervised fine-tuning in Appendix~\ref{sec:sft_generalization}, where we evaluate on  GLUE and GSM8K and show with stronger final performance and faster convergence (Table~\ref{tab:sft_results} and Figure~\ref{fig:sft_train_loss}).

\section{Conclusion}

In this work, we studied why geometry-informed LoRA variants that are effective in supervised fine-tuning can become unstable under RLVR. We identify two primary factors governing this instability: (1) \emph{subspace geometry}, which fundamentally shapes the optimization trajectory and concentrates the energy of parameter updates in certain subspaces; and (2) \emph{singular-value scaling}, a distinct destabilizing factor that amplifies gradient magnitudes and drives rapid violations of the KL-divergence constraint. We provide theoretical results showing that orthonormal initialization, paired with the standard LoRA choice of zero-initializing $B_0$, minimizes the approximation gap to full fine-tuning and helps control update magnitudes. Building on these insights, we propose \RLPO and \RLMO, two geometry-preserving orthonormal initialization schemes that retain useful spectral information from the pretrained weights without singular-value scaling. Empirical results on mathematical reasoning benchmarks show that these methods stabilize RLVR training and improve downstream performance. Overall, our findings suggest that orthonormal, geometry-aware initialization offers a principled and effective foundation for low-rank adaptation in RLVR.

\section*{Acknowledgments}
Laixi Shi acknowledges funding support from MERL. Ruijia Zhang thanks Chenliang Li, Di Zhang, Wenbin Wang, Qihan Liu, Pony Ma, Andrew Chen, Qingyu Yin, and the anonymous reviewers for their insightful discussions and constructive feedback, which helped improve this paper. Ruijia Zhang also thanks Mind Lab for broader and larger-scale experimental validation of the theoretical insights in this work.

\bibliography{References}
\bibliographystyle{apalike} 

\appendix
\onecolumn

\section{Proof of LoRA Optimization Dynamics}
\label{app:proofs_dynamics}

In this section, we provide detailed proofs for the results in Section~\ref{sec:dynamics}. We begin with some preliminary results that will be used in the subsequent proofs.

\paragraph{Proof of Assumption~\ref{ass:smooth}: smoothness of the loss function.}
\label{app:smooth}
We verify that Assumption~\ref{ass:smooth} holds for two widely used fine-tuning formulations: supervised fine-tuning (SFT) with the cross-entropy loss, and reinforcement learning with verifiable rewards (RLVR) with the policy-gradient surrogate loss. Consider a linear layer $z = Wx$, and let $p(z) := \pi_\theta(\cdot \mymid x) = \operatorname{softmax}(z) \in \mathbb{R}^{d}$ denote the token-probability vector over a vocabulary of size $d$, with $i$-th entry $p_i(z) = \exp(z_i)\big/\sum_{j=1}^d \exp(z_j)$.

\emph{SFT with cross-entropy loss.}
For a one-hot target token $a$, the cross-entropy loss is
\begin{equation}\label{eq:ce-loss}
    \mathcal{L}_{\mathrm{CE}}(z) \;=\; -\sum_{i=1}^{d} \mathbbm{1}\{i = a\}\log p_i(z) \;=\; -\log p_a(z),
\end{equation}
where $\mathbbm{1}\{\cdot\}$ is the indicator function. Its gradient and Hessian with respect to the logits are
\begin{equation*}
    \nabla_z \mathcal{L}_{\mathrm{CE}}(z) \;=\; p(z) - e_a, \qquad
    \nabla_z^2 \mathcal{L}_{\mathrm{CE}}(z) \;=\; H(z) \;:=\; \operatorname{diag}(p(z)) - p(z)\,p(z)^\top,
\end{equation*}
where $e_a$ is the standard basis vector corresponding to token $a$. Since $H(z)$ is a symmetric  positive semidefinite matrix, for any $v \in \mathbb{R}^d$ we have
\begin{equation}\label{eq:ce-hessian-bound}
    0 \;\leq\; v^\top H(z)\, v
    \;=\;
    \sum_{i=1}^d p_i(z)\,v_i^2
    -
    \Bigl(\sum_{i=1}^d p_i(z)\,v_i\Bigr)^{\!2}
    \;\leq\;
    \sum_{i=1}^d p_i(z)\,v_i^2
    \;\leq\;
    \|v\|_2^2.
\end{equation}
Hence $\|H(z)\|_{\mathrm{op}} = \sup_{\|v\|_2 = 1} v^\top H(z)\, v \leq 1$, so $\mathcal{L}_{\mathrm{CE}}$ is $1$-smooth with respect to the logits $z$, i.e., $L = 1$.

\emph{RLVR with policy gradient.}
We consider a variance-reduced policy-gradient objective, a standard surrogate of~\eqref{eq:rlvr} that is widely used in RL:
\begin{equation}\label{eq:pg-loss2}
    \mathcal{L}_{\mathrm{PG}}(\theta)
    \;=\; \mathbb{E}_{x \sim \mathcal{D},\, y \sim \pi_\theta(\cdot \mymid x)}
          \!\bigl[\widehat{A}(x, y)\log \pi_\theta(y \mymid x)\bigr]
    \;=\; \mathbb{E}_{x \sim \mathcal{D},\, y \sim \pi_\theta(\cdot \mymid x)}
          \!\bigl[\widehat{A}(x, y)\log p_y(z)\bigr],
\end{equation}
where $\widehat{A}$ is an estimate of the advantage function. As in standard policy-gradient updates, both $\widehat{A}$ and the sampling distribution are treated as fixed when differentiating with respect to the current logits, \eqref{eq:pg-loss2} thus takes the same form as~\eqref{eq:ce-loss}, but reweighted by $\widehat{A}$. Consequently, the Hessian with respect to $z$ is
\begin{equation*}
    \nabla_z^2 \mathcal{L}_{\mathrm{PG}}
    \;=\; -\,\mathbb{E}_{x \sim \mathcal{D},\, y \sim \pi_\theta(\cdot \mymid x)}
          \!\bigl[\widehat{A}(x, y)\bigr]
          \bigl[\operatorname{diag}(p(z)) - p(z)\,p(z)^\top\bigr],
\end{equation*}
and invoking~\eqref{eq:ce-hessian-bound} yields
\begin{equation}\label{eq:pg-hessian-bound}
    \bigl\|\nabla_z^2 \mathcal{L}_{\mathrm{PG}}\bigr\|_{\mathrm{op}}
    \;\leq\;
    \mathbb{E}_{x \sim \mathcal{D},\, y \sim \pi_\theta(\cdot \mymid x)}
    \!\bigl[\,|\widehat{A}(x, y)|\,\bigr].
\end{equation}
For binary verifiable rewards $R \in \{0, 1\}$, we have $|\widehat{A}(x, y)| \leq 1$, giving $L = 1$ in this case as well.

\paragraph{First-step LoRA update.}
\label{app:lemma_first_step}
We first introduce the following lemma about one-step LoRA update that will be used to compare the LoRA trajectory with the full fine-tuning trajectory.

\begin{lemma}[First-step LoRA update]
\label{lem:first_step}
Let $W_0 \in \mathbb{R}^{m \times n}$ be the pretrained weight and consider the LoRA parameterization $W = W_0 + BA$ with $B_0 = \mathbf{0}_{m \times r}$ and $A_0 \in \mathbb{R}^{r \times n}$. Under one gradient-descent step with step size $\eta$, the induced LoRA update satisfies
\begin{equation}
\Delta W_1^{\mathrm{LoRA}} = \Delta W_1^{\mathrm{full}} A_0^\top A_0,
\end{equation}
where $\Delta W_1^{\mathrm{full}} = -\eta \left.\nabla_W \mathcal{L}(W)\right|_{W=W_0}$ and $\Delta W_1^{\mathrm{LoRA}} = B_1A_1$.
\end{lemma}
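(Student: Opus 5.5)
The plan is a direct computation of one simultaneous gradient-descent step on $(A,B)$ starting from $(A_0,B_0)=(A_0,\mathbf{0})$, using the chain-rule expressions for the LoRA gradients stated before Assumption~\ref{ass:smooth}. Write $G_0 = \nabla_W \mathcal{L}(W)\big|_{W=W_0}$. Since $B_0=\mathbf{0}$, the effective weight at initialization is $W_0 + B_0A_0 = W_0$, so $G_0$ is exactly the gradient that full fine-tuning sees at step $0$, and $\Delta W_1^{\mathrm{full}} = -\eta G_0$ by definition.

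First, I will compute the parameter gradients. Because $W = W_0 + BA$ depends on $(A,B)$ only through the product $BA$, the chain rule gives $\partial\mathcal{L}/\partial A = B_0^\top G_0$ and $\partial\mathcal{L}/\partial B = G_0 A_0^\top$ at initialization. The vanishing of $B_0$ immediately gives $\partial\mathcal{L}/\partial A = 0$. Hence the simultaneous update is
\begin{equation*}
A_1 = A_0, \qquad B_1 = B_0 - \eta\, G_0 A_0^\top = -\eta\, G_0 A_0^\top.
\end{equation*}

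Second, I will form the product:
\begin{equation*}
\Delta W_1^{\mathrm{LoRA}} = B_1A_1 - B_0A_0 = -\eta\, G_0 A_0^\top A_0 = \Delta W_1^{\mathrm{full}} A_0^\top A_0,
\end{equation*}
which is the claimed identity. Note that $B_1A_1$ is itself the increment here, since $B_0A_0=\mathbf{0}$.

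There is no real obstacle. The only point needing care is to state that the update is a simultaneous gradient step, with both $A$ and $B$ updated using gradients evaluated at $(A_0,B_0)$. Under that convention the identity is exact, with no $\mathcal{O}(\eta^2)$ remainder. If the updates were instead applied sequentially with $B$ first, $A$ would receive an $\mathcal{O}(\eta)$ correction and the identity would hold only up to $\mathcal{O}(\eta^2)$. I would also note that the identity holds for any loss and any $A_0$; Assumption~\ref{ass:smooth} is not needed here.
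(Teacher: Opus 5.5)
Your proposal is correct and follows the paper's proof essentially verbatim: the chain-rule gradients $B_0^\top G_0=\mathbf{0}$ and $G_0A_0^\top$ give $A_1=A_0$ and $B_1=-\eta G_0A_0^\top$, so $B_1A_1=-\eta G_0A_0^\top A_0=\Delta W_1^{\mathrm{full}}A_0^\top A_0$. Your remark that the identity is exact under a simultaneous gradient step and does not need Assumption~\ref{ass:smooth} is also consistent with the paper's argument.
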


\begin{proof}
Define the full-weight gradient at initialization as
\[
G_0 \coloneqq \left.\nabla_W \mathcal{L}(W)\right|_{W=W_0} \in \mathbb{R}^{m \times n}.
\]
By the chain rule applied to $W=W_0+BA$, the gradients of the LoRA factors at $(B,A)=(B_0,A_0)$ are
\begin{align}\label{eq:lora-gradient}
\left.\nabla_A \mathcal{L}(W_0+BA)\right|_{(B,A)=(B_0,A_0)}
= B_0^\top G_0 = \mathbf{0},
\qquad
\left.\nabla_B \mathcal{L}(W_0+BA)\right|_{(B,A)=(B_0,A_0)}
= G_0 A_0^\top .
\end{align}
Therefore,
\[
A_1 = A_0, \qquad
B_1 = -\eta G_0 A_0^\top .
\]
The first LoRA weight update is consequently
\[
\Delta W_1^{\mathrm{LoRA}}
= B_1A_1
= -\eta G_0A_0^\top A_0.
\]
Since $\Delta W_1^{\mathrm{full}}=-\eta G_0$, the claimed identity follows.
\end{proof}

\subsection{Proof of Theorem~\ref{thm:lora_error} and Proposition~\ref{prop:ortho_bounded}}
\label{app:lora_error}

We prove the main approximation bound in Theorem~\ref{thm:lora_error} first, then establish the additional lower-bound statement in the theorem, and finally prove Proposition~\ref{prop:ortho_bounded}.

\paragraph{Step 1: introducing useful notations.}
Let $W_t^{\mathrm{LoRA}}$ and $W_t^{\mathrm{full}}$ denote the LoRA and fulfl fine-tuning weights after $t$ gradient-descent steps, initialized from the same pretrained weight:
\[
W_0^{\mathrm{LoRA}} = W_0^{\mathrm{full}} = W_0.
\]
For LoRA, we express the weights as
\[
W_t^{\mathrm{LoRA}} = W_0 + B_tA_t,
\qquad
B_0 = \mathbf{0}, \quad A_0 \in \mathbb{R}^{r \times n}.
\]
Define the approximation error at step $t$ between LoRA and full fine-tuning as
\[
E_t \coloneqq \left\|W_t^{\mathrm{LoRA}}-W_t^{\mathrm{full}}\right\|_F .
\]
We will prove a recursion for $E_t$ and then optimize the initialization-dependent factor $\|I_n-A_0^\top A_0\|_2$.

Fixing an input $x$, we denote the logits as
\[
z_t^{\mathrm{LoRA}} \coloneqq W_t^{\mathrm{LoRA}}x,
\qquad
z_t^{\mathrm{full}} \coloneqq W_t^{\mathrm{full}}x .
\]
Throughout this section, we use the following notations for brevity:
\[
G_t^{\mathrm{LoRA}}
\coloneqq
\left.\nabla_W \mathcal{L}(W)\right|_{W=W_t^{\mathrm{LoRA}}}
= g_t^{\mathrm{LoRA}}x^\top,
\qquad
G_t^{\mathrm{full}}
\coloneqq
\left.\nabla_W \mathcal{L}(W)\right|_{W=W_t^{\mathrm{full}}}
= g_t^{\mathrm{full}}x^\top,
\]
where $g_t^{\mathrm{LoRA}}$ and $g_t^{\mathrm{full}}$ are the corresponding gradients with respect to logits.

\paragraph{Step 2: Proof of the main results in \eqref{eq:lora_error}.}
We next compare the LoRA and full fine-tuning dynamics step by step and then unroll the resulting recursion.

The full fine-tuning update is
\[
W_{t+1}^{\mathrm{full}}
= W_t^{\mathrm{full}}-\eta G_t^{\mathrm{full}}.
\]
For LoRA, gradient descent on the factors gives
\[
A_{t+1}=A_t-\eta B_t^\top G_t^{\mathrm{LoRA}},
\qquad
B_{t+1}=B_t-\eta G_t^{\mathrm{LoRA}}A_t^\top .
\]
With the above results in hand, combined with the fact that $B_0=\mathbf{0}$, we have for any fixed horizon $T$ and sufficiently small $\eta$,
\[
B_t=\mathcal{O}(\eta),
\qquad
A_t=A_0+\mathcal{O}(\eta^2),
\qquad 0\le t\le T.
\]
Thus the LoRA weight increment satisfies
\begin{align}
W_{t+1}^{\mathrm{LoRA}}-W_t^{\mathrm{LoRA}}
&= B_{t+1}A_{t+1}-B_tA_t \notag \\
&= -\eta G_t^{\mathrm{LoRA}}A_0^\top A_0+\mathcal{O}(\eta^3).
\label{eq:lora_step_expansion}
\end{align}
Comparing the two dynamics, we obtain
\begin{align}
W_{t+1}^{\mathrm{LoRA}}-W_{t+1}^{\mathrm{full}}
&= W_t^{\mathrm{LoRA}}-W_t^{\mathrm{full}}
-\eta G_t^{\mathrm{LoRA}}(A_0^\top A_0-I_n)
+\eta\left(G_t^{\mathrm{full}}-G_t^{\mathrm{LoRA}}\right)
+\mathcal{O}(\eta^3).
\end{align}
Taking Frobenius norms and applying the triangle inequality gives
\begin{align}
E_{t+1}
&\le
E_t
+\eta\left\|G_t^{\mathrm{LoRA}}(I_n-A_0^\top A_0)\right\|_F
+\eta\left\|G_t^{\mathrm{LoRA}}-G_t^{\mathrm{full}}\right\|_F
+\mathcal{O}(\eta^3) \notag \\
&\le E_t
+\eta\left\|G_t^{\mathrm{LoRA}}\right\|_F \left\|I_n-A_0^\top A_0\right\|_2
+\eta\left\|G_t^{\mathrm{LoRA}}-G_t^{\mathrm{full}}\right\|_F
+\mathcal{O}(\eta^3) \notag \\
&\le
E_t
+M\eta\left\|I_n-A_0^\top A_0\right\|_2
+\eta\left\|G_t^{\mathrm{LoRA}}-G_t^{\mathrm{full}}\right\|_F
+\mathcal{O}(\eta^3),
\label{eq:lora_error_recursion_pre}
\end{align}
where the second inequality uses $\|G_t^{\mathrm{LoRA}}\|_F \leq M$ in Assumption~\ref{ass:smooth}. Then it remains to bound the gradient discrepancy. Invoking
\[
G_t^{\mathrm{LoRA}}-G_t^{\mathrm{full}}
=\left(g_t^{\mathrm{LoRA}}-g_t^{\mathrm{full}}\right)x^\top,
\]
we have
\[
\left\|G_t^{\mathrm{LoRA}}-G_t^{\mathrm{full}}\right\|_F
=
\left\|g_t^{\mathrm{LoRA}}-g_t^{\mathrm{full}}\right\|_2\|x\|_2 .
\]
Applying the $L$-smoothness of the loss with respect to logits in Assumption~\ref{ass:smooth}, one has
\begin{align}
\left\|g_t^{\mathrm{LoRA}}-g_t^{\mathrm{full}}\right\|_2
&\le
L\left\|z_t^{\mathrm{LoRA}}-z_t^{\mathrm{full}}\right\|_2 =
L\left\|(W_t^{\mathrm{LoRA}}-W_t^{\mathrm{full}})x\right\|_2 \notag \le
L E_t\|x\|_2 .
\label{eq:logit_smooth_bound}
\end{align}
Substituting the above results into~\eqref{eq:lora_error_recursion_pre} yields
\begin{equation}
\label{eq:lora_error_recursion}
E_{t+1}
\le
\left(1+L\eta\|x\|_2^2\right)E_t
+M\eta\left\|I_n-A_0^\top A_0\right\|_2
+\mathcal{O}(\eta^3).
\end{equation}
To continue, as $E_0=\|W_0 - W_0\|_F =0$, recursively applying \eqref{eq:lora_error_recursion} gives
\[
E_T
\le
\left(M\eta\left\|I_n-A_0^\top A_0\right\|_2+\mathcal{O}(\eta^3)\right)
\sum_{s=0}^{T-1}(1+L\eta\|x\|_2^2)^s .
\]
Dividing by $T$ and using
\[
\Gamma_T(\eta)
\coloneqq
\frac{1}{T}\sum_{s=0}^{T-1}
\left(1+L\eta\|x\|_2^2\right)^s
\]
gives
\[
\frac{E_T}{T} = \frac{1}{T}
\left\|W_T^{\mathrm{LoRA}}-W_T^{\mathrm{full}}\right\|_F
\le
M\eta\,\Gamma_T(\eta)\,
\left\|I_n-A_0^\top A_0\right\|_2
+\mathcal{O}(\eta^2).
\]

\paragraph{Step 3: lower bound of $\left\|I_n-A_0^\top A_0\right\|_2$.}
It remains to justify the lower bound on the initialization-dependent factor and the case of equality for row-orthonormal initialization. Since rank $r<n$,
\[
\operatorname{rank}(A_0^\top A_0)\le \operatorname{rank}(A_0)\le r<n.
\]
Therefore, $A_0^\top A_0$ has at least $n-r$ zero eigenvalues, so $I_n-A_0^\top A_0$ has at least $n-r$ eigenvalues equal to $1$. Therefore, as $I_n-A_0^\top A_0$ is a symmetric matrix, its spectral norm is equal to the largest absolute value of its eigenvalues, which is at least $1$, namely,
\begin{equation}
\label{eq:lowerbd_rankdef}
\left\|I_n-A_0^\top A_0\right\|_2 \ge 1.
\end{equation}
If $A_0$ has orthonormal rows, then $A_0A_0^\top=I_r$ and $A_0^\top A_0$ is the orthogonal projector onto $\operatorname{row}(A_0)$. Hence $I_n-A_0^\top A_0$ is the orthogonal projector onto $\operatorname{row}(A_0)^\perp$, and
\[
\left\|I_n-A_0^\top A_0\right\|_2
=1.
\]
Combining this equality with~\eqref{eq:lowerbd_rankdef} shows that row-orthonormal initialization attains the minimum possible value of the initialization-dependent factor $\left\|I_n-A_0^\top A_0\right\|_2$ for the gap between the LoRA and full-fined models.

\paragraph{Step 4: Proof of Proposition~\ref{prop:ortho_bounded}.}
\label{app:ortho_bounded}

The proposition follows by applying Lemma~\ref{lem:first_step} and using the fact that $A_0^\top A_0$ is a projection under row-orthonormal initialization.

Define
\[
G_0 \coloneqq \left.\nabla_W \mathcal{L}(W)\right|_{W=W_0}.
\]
If $A_0$ has orthonormal rows, then $A_0^\top A_0$ is an orthogonal projection and $\|A_0^\top A_0\|_2=1$. By Lemma~\ref{lem:first_step},
\[
\Delta W_1^{\mathrm{LoRA}}=-\eta G_0A_0^\top A_0.
\]
Using the mixed Frobenius--spectral norm inequality,
\[
\|AB\|_F \le \|A\|_F\|B\|_2,
\]
we obtain
\[
\left\|\Delta W_1^{\mathrm{LoRA}}\right\|_F
=\eta\|G_0A_0^\top A_0\|_F
\le
\eta\|G_0\|_F\|A_0^\top A_0\|_2
=
\eta\|G_0\|_F.
\]

\section{Proof of Gradient Amplification of PiSSA over OLoRA}
\label{app:proof_pissa_olora_amplification}
In this section, we prove Theorem~\ref{thm:pissa_amplification}, which explains why PiSSA is more aggressive than OLoRA even when both are initialized on the same principal singular subspace. 
We prove the first-order comparison between PiSSA and OLoRA under the residual parameterization
\[
W=(W_0-B_0A_0)+BA,
\]
which ensures that both methods start from the same effective weight $W_0$. For both PiSSA and OLoRA, for a fixed input $x$, define
\[
G_0 \coloneqq \left.\nabla_W\mathcal{L}(W)\right|_{W=W_0}=gx^\top.
\]
Then we can represent $G_0$ by expanding $g$ and $x$ in the singular-vector bases of $W_0$. Let $W_0=U\Sigma V^\top$, with the left singular vectors $\{u_i\}$ and right singular vectors $\{v_j\}$ form orthonormal bases. Denoting
\begin{align}
\alpha_i\coloneqq u_i^\top g,\qquad
\beta_j\coloneqq v_j^\top x,
\end{align}
we can express $g$ and $x$ in the SVD basis as
\begin{align}
g=\sum_i (u_i^\top g)u_i=\sum_i \alpha_i u_i,
\qquad
x=\sum_j (v_j^\top x)v_j=\sum_j \beta_j v_j,
\end{align}
Therefore,
\begin{align}
G_0=gx^\top = \Big(\sum_i \alpha_i u_i\Big)
\Big(\sum_j \beta_j v_j\Big)^\top=
\sum_{i,j}\alpha_i\beta_j u_i v_j^\top,
\end{align}
where matrices $\{u_i v_j^\top\}_{i,j}$ are orthonormal under the Frobenius inner product.
To continue, recalling the gradient for the first-step in \eqref{eq:lora-gradient} gives
\[
B_1=B_0-\eta G_0A_0^\top,
\qquad
A_1=A_0-\eta B_0^\top G_0,
\]
leading to the update
\begin{align}
\Delta W_1=B_1A_1-B_0A_0 
&=-\eta\left(B_0B_0^\top G_0+G_0A_0^\top A_0\right)
  +\eta^2G_0A_0^\top B_0^\top G_0.
\label{eq:svd_init_first_step}
\end{align}
Thus, up to second-order terms in $\eta$, the update is governed by the two projection factors $B_0B_0^\top$ and $A_0^\top A_0$. 

Before proceeding to the main proofs, we introduce two lemmas: the first gives the one-step updates for PiSSA and LoRA, while the second compares their coefficients.

\paragraph{First-step updates for PiSSA and OLoRA.}
Let $\mathcal{R}=\{1,\ldots,r\}$ denote the retained principal components. 
\begin{lemma}[PiSSA and OLoRA first-step updates]
\label{lem:pissa_olora_expansions}
For PiSSA with $B_0=U_r\Sigma_r^{1/2}$ and $A_0=\Sigma_r^{1/2}V_r^\top$, and for OLoRA with $B_0=U_r$ and $A_0=V_r^\top$, the first-step updates satisfy
\[
\Delta W_1^{s}
=-\eta\sum_{i,j}c_{ij}^{s}\alpha_i\beta_j\,u_i v_j^\top
+\mathcal{O}(\eta^2),
\qquad
s\in\{\mathrm{PiSSA},\mathrm{OLoRA}\},
\]
where
\[
c_{ij}^{\mathrm{PiSSA}}
=\sigma_i\mathbf{1}_{\{i\in\mathcal{R}\}}
+\sigma_j\mathbf{1}_{\{j\in\mathcal{R}\}},
\qquad
c_{ij}^{\mathrm{OLoRA}}
=\mathbf{1}_{\{i\in\mathcal{R}\}}
+\mathbf{1}_{\{j\in\mathcal{R}\}}.
\]
\end{lemma}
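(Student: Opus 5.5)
Starting from the first-order expansion \eqref{eq:svd_init_first_step}, $\Delta W_1=-\eta(B_0B_0^\top G_0+G_0A_0^\top A_0)+\mathcal{O}(\eta^2)$, it suffices to compute the two projection-type factors $B_0B_0^\top$ and $A_0^\top A_0$ for each initialization. Then we apply them to the expansion $G_0=\sum_{i,j}\alpha_i\beta_j u_iv_j^\top$ term by term. The $\eta^2$ term $G_0A_0^\top B_0^\top G_0$ is bounded (for fixed $g,x$ and fixed $W_0$), so it is absorbed into $\mathcal{O}(\eta^2)$ without further work.

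\textbf{PiSSA.} With $B_0=U_r\Sigma_r^{1/2}$ and $A_0=\Sigma_r^{1/2}V_r^\top$, we get $B_0B_0^\top=U_r\Sigma_rU_r^\top=\sum_{k\in\mathcal{R}}\sigma_k u_ku_k^\top$ and $A_0^\top A_0=V_r\Sigma_rV_r^\top=\sum_{k\in\mathcal{R}}\sigma_k v_kv_k^\top$. Orthonormality of the singular vectors gives
\[
u_ku_k^\top u_i v_j^\top=\mathbf{1}_{\{k=i\}}u_iv_j^\top,
\qquad
u_iv_j^\top v_kv_k^\top=\mathbf{1}_{\{k=j\}}u_iv_j^\top .
\]
Hence $B_0B_0^\top(u_iv_j^\top)=\sigma_i\mathbf{1}_{\{i\in\mathcal{R}\}}u_iv_j^\top$ and $(u_iv_j^\top)A_0^\top A_0=\sigma_j\mathbf{1}_{\{j\in\mathcal{R}\}}u_iv_j^\top$. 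Summing over $(i,j)$ with weights $\alpha_i\beta_j$ and using linearity yields the coefficient $c^{\mathrm{PiSSA}}_{ij}$.

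\textbf{OLoRA.} The computation is identical with $\Sigma_r$ replaced by $I_r$. Then $B_0B_0^\top=U_rU_r^\top$ and $A_0^\top A_0=V_rV_r^\top$ are orthogonal projectors, which produces the indicator coefficients $c^{\mathrm{OLoRA}}_{ij}$.

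\textbf{Main subtlety: basis completeness.} If $k=\operatorname{rank}(W_0)<\max(m,n)$, the SVD bases $\{u_i\}$ and $\{v_j\}$ must be completed to full orthonormal bases of $\mathbb{R}^m$ and $\mathbb{R}^n$ so that the expansions of $g$ and $x$ are exact. The completed indices lie outside $\mathcal{R}$, since $r\le k$, so they contribute zero under both factors, consistent with the stated formulas. Beyond this point the argument is routine linear algebra.
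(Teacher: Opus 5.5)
Your proposal is correct and follows essentially the same route as the paper: you compute $B_0B_0^\top=U_r\Sigma_rU_r^\top$ (resp.\ $U_rU_r^\top$) and $A_0^\top A_0=V_r\Sigma_rV_r^\top$ (resp.\ $V_rV_r^\top$), apply them to $G_0=\sum_{i,j}\alpha_i\beta_j u_iv_j^\top$, and substitute into the first-step expansion, absorbing the $\eta^2 G_0A_0^\top B_0^\top G_0$ term into $\mathcal{O}(\eta^2)$. Your remark about completing the thin SVD bases, so that the expansions of $g$ and $x$ are exact, is a small rigor point the paper leaves implicit, and you handle it correctly because the added indices lie outside $\mathcal{R}$.
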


\begin{proof}
For PiSSA, applying $B_0=U_r\Sigma_r^{1/2}$ and $A_0=\Sigma_r^{1/2}V_r^\top$ gives
\[
B_0B_0^\top=U_r\Sigma_rU_r^\top,
\qquad
A_0^\top A_0=V_r\Sigma_rV_r^\top,
\]
leading to the two factors
\[
U_r\Sigma_rU_r^\top G_0
=\sum_{i,j}\sigma_i\mathbf{1}_{\{i\in\mathcal{R}\}}\alpha_i\beta_j\,u_i v_j^\top, \qquad 
G_0V_r\Sigma_rV_r^\top
=\sum_{i,j}\sigma_j\mathbf{1}_{\{j\in\mathcal{R}\}}\alpha_i\beta_j\,u_i v_j^\top.
\]
Substitution into~\eqref{eq:svd_init_first_step} yields
\[
\Delta W_1^{\mathrm{PiSSA}}
=-\eta\sum_{i,j}
\left(
\sigma_i\mathbf{1}_{\{i\in\mathcal{R}\}}
+\sigma_j\mathbf{1}_{\{j\in\mathcal{R}\}}
\right)
\alpha_i\beta_j\,u_i v_j^\top
+\mathcal{O}(\eta^2).
\]
Analogously, for OLoRA,
\[
B_0B_0^\top=U_rU_r^\top,
\qquad
A_0^\top A_0=V_rV_r^\top,
\]
leading to 
\[
U_rU_r^\top G_0
=\sum_{i,j}\mathbf{1}_{\{i\in\mathcal{R}\}}\alpha_i\beta_j\,u_i v_j^\top, \qquad
G_0V_rV_r^\top
=\sum_{i,j}\mathbf{1}_{\{j\in\mathcal{R}\}}\alpha_i\beta_j\,u_i v_j^\top.
\]
Substitution into~\eqref{eq:svd_init_first_step} yields
\[
\Delta W_1^{\mathrm{OLoRA}}
=-\eta\sum_{i,j}
\left(
\mathbf{1}_{\{i\in\mathcal{R}\}}
+\mathbf{1}_{\{j\in\mathcal{R}\}}
\right)
\alpha_i\beta_j\,u_i v_j^\top
+\mathcal{O}(\eta^2).
\]
\end{proof}

\begin{lemma}[Coefficient comparison]
\label{lem:pissa_olora_coeff}
For all $(i,j)$,
\[
c_{ij}^{\mathrm{PiSSA}}\ge \sigma_r c_{ij}^{\mathrm{OLoRA}}.
\]
\end{lemma}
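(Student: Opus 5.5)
The plan is a direct case analysis on the index pair $(i,j)$ according to membership of $i$ and $j$ in the retained set $\mathcal{R}=\{1,\ldots,r\}$. The explicit coefficients come from Lemma~\ref{lem:pissa_olora_expansions}. The single fact needed is that singular values are ordered, $\sigma_1\ge\cdots\ge\sigma_k>0$, so every $\ell\in\mathcal{R}$ satisfies $\sigma_\ell\ge\sigma_r$.

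The four cases are as follows.
\begin{itemize}
\item \emph{Neither index retained} ($i\notin\mathcal{R}$, $j\notin\mathcal{R}$). Both coefficients vanish, so $c_{ij}^{\mathrm{PiSSA}}=0=\sigma_r\cdot 0$ and the inequality holds with equality.
\item \emph{Only $i$ retained} ($i\in\mathcal{R}$, $j\notin\mathcal{R}$). Here $c_{ij}^{\mathrm{PiSSA}}=\sigma_i$ and $c_{ij}^{\mathrm{OLoRA}}=1$, and $\sigma_i\ge\sigma_r$ by ordering.
\item \emph{Only $j$ retained}. This is symmetric to the previous case, giving $\sigma_j\ge\sigma_r$.
\item \emph{Both retained} ($i,j\in\mathcal{R}$). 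Here $c_{ij}^{\mathrm{PiSSA}}=\sigma_i+\sigma_j\ge 2\sigma_r=\sigma_r c_{ij}^{\mathrm{OLoRA}}$.
\end{itemize}
A more compact route is to write the inequality termwise as $\sigma_i\mathbf{1}_{\{i\in\mathcal{R}\}}\ge\sigma_r\mathbf{1}_{\{i\in\mathcal{R}\}}$, do the same for $j$, and add the two.

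The only subtlety is bookkeeping with the index range. The vectors $u_i$ and $v_j$ may extend beyond $k=\operatorname{rank}(W_0)$ when one completes them to orthonormal bases, and such indices carry $\sigma=0$. This causes no difficulty, because those indices lie outside $\mathcal{R}$ (we assume $r\le k$) and their indicator coefficients are zero. I expect no real obstacle.
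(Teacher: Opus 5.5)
Your proposal is correct and takes the same route as the paper. The paper uses your compact argument: it bounds $\sigma_i\mathbf{1}_{\{i\in\mathcal{R}\}}\ge\sigma_r\mathbf{1}_{\{i\in\mathcal{R}\}}$ and the analogous $j$-term using the ordering of the singular values, then adds the two; your four-case analysis simply unrolls this, and your remark on indices beyond $\operatorname{rank}(W_0)$ is harmless extra bookkeeping.
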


\begin{proof}
If $i\in\mathcal{R}$, then $\sigma_i\ge\sigma_r$; if $j\in\mathcal{R}$, then $\sigma_j\ge\sigma_r$. Therefore,
\begin{align*}
c_{ij}^{\mathrm{PiSSA}}
&=\sigma_i\mathbf{1}_{\{i\in\mathcal{R}\}}
+\sigma_j\mathbf{1}_{\{j\in\mathcal{R}\}} \ge
\sigma_r\left(
\mathbf{1}_{\{i\in\mathcal{R}\}}
+\mathbf{1}_{\{j\in\mathcal{R}\}}
\right)
=\sigma_r c_{ij}^{\mathrm{OLoRA}}.
\end{align*}
\end{proof}
Now we are positioned to prove the main result.

\paragraph{Proof of Theorem~\ref{thm:pissa_amplification}.} ApplyingLemma~\ref{lem:pissa_olora_expansions} gives
\[
\Delta W_1^{\mathrm{PiSSA}}
=-\eta\sum_{i,j}c_{ij}^{\mathrm{PiSSA}}\alpha_i\beta_j\,u_i v_j^\top
+\mathcal{O}(\eta^2),
\qquad
\Delta W_1^{\mathrm{OLoRA}}
=-\eta\sum_{i,j}c_{ij}^{\mathrm{OLoRA}}\alpha_i\beta_j\,u_i v_j^\top
+\mathcal{O}(\eta^2).
\]
As the basis $\{u_i v_j^\top\}_{i,j}$ is Frobenius-orthonormal, one has
\[
\Big\|
\sum_{i,j}c_{ij}^{\mathrm{PiSSA}}\alpha_i\beta_j\,u_i v_j^\top
\Big\|_F^2
=\sum_{i,j}\left(c_{ij}^{\mathrm{PiSSA}}\right)^2\alpha_i^2\beta_j^2,
\qquad
\Big\|
\sum_{i,j}c_{ij}^{\mathrm{OLoRA}}\alpha_i\beta_j\,u_i v_j^\top
\Big\|_F^2
=\sum_{i,j}\left(c_{ij}^{\mathrm{OLoRA}}\right)^2\alpha_i^2\beta_j^2.
\]
Applying Lemma~\ref{lem:pissa_olora_coeff} to all $(i,j)$ gives
\[
\Big\|
\sum_{i,j}c_{ij}^{\mathrm{PiSSA}}\alpha_i\beta_j\,u_i v_j^\top
\Big\|_F
\ge
\sigma_r
\Big\|
\sum_{i,j}c_{ij}^{\mathrm{OLoRA}}\alpha_i\beta_j\,u_i v_j^\top
\Big\|_F,
\]
and then
\[
\left\|\Delta W_1^{\mathrm{PiSSA}}\right\|_F
\ge
\sigma_r\left\|\Delta W_1^{\mathrm{OLoRA}}\right\|_F
+\mathcal{O}(\eta^2).
\]
Furthermore, whenever the leading OLoRA update is nonzero, we have
\[
\liminf_{\eta\to 0}
\frac{\left\|\Delta W_1^{\mathrm{PiSSA}}\right\|_F}
{\left\|\Delta W_1^{\mathrm{OLoRA}}\right\|_F}
\ge \sigma_r.
\]

 \section{Experimental Details}
  \label{app:experimental_details}

\subsection{Training details of RLVR}

  \paragraph{Model and dataset for training.}
We conduct our main DAPO experiments on DeepSeek-R1-Distill-Qwen-1.5B. All methods are trained on the DAPO-Math-17k dataset, which contains 17{,}000 mathematical reasoning problems with verifiable answers. We apply LoRA adapters to all linear layers of the policy model, including the attention and MLP projections. Unless otherwise specified, all methods share the same base model, training data, reward function, and optimization setup.

\paragraph{Implementation setup.}
Training prompts are taken from the \texttt{prompt} field of the DAPO-Math-17k parquet file. We apply left truncation with a maximum prompt length of 512 tokens. During rollout generation, the maximum response length is set to 16{,}384 tokens,  matching the long-reasoning regime used by DAPO. For each prompt, we sample 8 candidate responses by drawing directly from the model's
output distribution without any modification. To clarify the sampling hyperparameters, temperature $\tau$ controls the randomness of the generation distribution. Top-$p$ (nucleus sampling) restricts sampling to the smallest token set whose cumulative probability exceeds $p$, while top-$k$ limits the pool to the $k$ most probable tokens (with $k = -1$ indicating this filter is disabled). Specifically, here we use $\tau=1.0$, top-$p=1.0$, top-$k=-1$, i.e., no temperature scaling and no vocabulary truncation.

  All experiments are implemented in the verl \citep{sheng2025hybridflow} framework. We use vLLM \citep{kwon2023efficient} for rollout generation and Flash Attention \citep{dao2024flashattention} for efficient attention computation.
  Training is run on 8 NVIDIA A100-SXM4-80GB GPUs with FSDP \citep{zhao2023pytorch}. The rollout engine uses tensor parallelism of size 2 during training \citep{shoeybi2019megatron}. We enable
  gradient checkpointing, remove-padding optimization, chunked prefill, dynamic batch sizing, actor parameter offload, actor optimizer offload, and
  reference parameter offload \citep{chen2016training,ren2021zero}. 

\paragraph{Hyperparameter settings.}
We use two training configurations for our DAPO experiments. The main configuration uses a constant learning-rate schedule and serves as the primary setting for comparing LoRA, PiSSA, and MiLoRA. We additionally run cosine-decay variants to study the effect of the learning-rate schedule, keeping all remaining hyperparameters matched to the corresponding constant learning-rate runs. Unless otherwise stated, all methods use the DAPO training objective with group-relative advantage estimation, LoRA-style adapters on all linear layers, 8 responses per prompt, and the same DAPO-style objective without an explicit KL reward penalty or actor KL loss.

In the constant learning-rate setting, we train for 500 optimization steps using AdamW with no warmup, weight decay $0.1$, and gradient clipping at $1.0$. For the standard LoRA baseline and the rank-16 variants, we use learning rate $1\times 10^{-5}$, rank $r=16$, and scaling parameter $\alpha=32$, corresponding to an effective scaling factor $\alpha/r=2$. For PiSSA and MiLoRA, we follow the commonly used configuration with learning rate $1\times 10^{-5}$, rank $r=16$, and $\alpha=32$. LoRA dropout is set to $0.0$ for all methods. The clipping range is asymmetric, with a lower clip ratio of $0.2$ and an upper clip ratio of $0.28$. The prompt batch size is $128$, and the PPO mini-batch size is $32$.

In the cosine-decay setting, we keep the optimizer, warmup, weight decay, batch size, rollout configuration, and adapter configuration matched to the corresponding constant-LR run, replacing only the constant schedule with cosine decay. For our 1.5B cosine-decay runs, the initial learning rate is $1\times 10^{-5}$, warmup is $0$, and weight decay is $0.1$.

\FloatBarrier
\begin{longtable}{@{}lll@{}}
\caption{Hyperparameters for the DAPO 1.5B experiments.}
\label{tab:hyperparameters_dapo_1p5b} \\

\toprule
\textbf{Hyperparameter} & \textbf{Constant-LR setting} & \textbf{Cosine-decay setting} \\
\midrule
\endfirsthead

\multicolumn{3}{c}%
{{\tablename\ \thetable{} }} \\
\toprule
\textbf{Hyperparameter} & \textbf{Constant-LR setting} & \textbf{Cosine-decay setting} \\
\midrule
\endhead

\midrule
\multicolumn{3}{r}{{}} \\
\endfoot

\bottomrule
\endlastfoot

\multicolumn{3}{l}{\textit{Model and Software}} \\
Base model & DeepSeek-R1-Distill-Qwen-1.5B & DeepSeek-R1-Distill-Qwen-1.5B \\
Training framework & verl 0.7.0.dev & verl 0.7.0.dev \\
Inference engine & vLLM 0.11.0 & vLLM 0.11.0 \\
Flash Attention & 2.8.1 & 2.8.1 \\
PyTorch & 2.8.0+cu126 & 2.8.0+cu126 \\
Hardware & 8 $\times$ NVIDIA A100-SXM4-80GB & 8 $\times$ NVIDIA A100-SXM4-80GB \\
\midrule

\multicolumn{3}{l}{\textit{Optimization}} \\
Optimizer & AdamW & AdamW \\
Learning rate & $1\times10^{-5}$ & $1\times10^{-5}$ \\
Learning rate schedule & Constant & Cosine decay \\
Warmup steps & $0$ & $0$ \\
Weight decay & $0.1$ & $0.1$ \\
Gradient clipping & $1.0$ & $1.0$ \\
Training steps & $500$ & $500$ \\
\midrule

\multicolumn{3}{l}{\textit{Batch Size}} \\
Prompt batch size & $128$ & $128$ \\
PPO mini-batch size & $32$ & $32$ \\
Responses per prompt & $8$ & $8$ \\
\midrule

\multicolumn{3}{l}{\textit{GRPO / DAPO}} \\
Advantage estimator & GRPO & GRPO \\
KL reward coefficient & $0.0$ & $0.0$ \\
Actor KL loss coefficient & $0.0$ & $0.0$ \\
Clip ratio lower / upper & $0.2 / 0.28$ & $0.2 / 0.28$ \\
Clip ratio $c$ & $10.0$ & $10.0$ \\
Loss aggregation & Token mean & Token mean \\
Entropy coefficient & $0$ & $0$ \\
Overlong buffer length & $4096$ & $4096$ \\
Overlong penalty factor & $1.0$ & $1.0$ \\
\midrule

\multicolumn{3}{l}{\textit{Adapter Configuration}} \\
Adapter type & LoRA-style & LoRA-style \\
Rank ($r$) & $16$ & $16$ \\
Alpha ($\alpha$) & $32$ & $32$ \\
Target modules & All linear layers & All linear layers \\
Dropout & $0.0$ & $0.0$ \\
Bias & None & None \\
\midrule

\multicolumn{3}{l}{\textit{Training Rollout Generation}} \\
Max prompt length & $512$ & $512$ \\
Max response length & $16384$ & $16384$ \\
Temperature & $1.0$ & $1.0$ \\
Top-$p$ & $1.0$ & $1.0$ \\
Top-$k$ & $-1$ & $-1$ \\
Rollout tensor parallel size & $2$ & $2$ \\
Rollout GPU memory utilization & $0.75$ & $0.75$ \\
Chunked prefill & Enabled & Enabled \\

\end{longtable}
\FloatBarrier

\subsection{Evaluation setup}

\paragraph{Evaluation benchmarks.}
We evaluate all methods on five mathematical reasoning benchmarks:

  \begin{itemize}
      \item \textbf{GSM8K}~\citep{cobbe2021gsm8k}: Grade-school math word problems requiring multi-step arithmetic reasoning. We use the full test
  set of 1,319 examples and report pass@1 with greedy decoding.
      \item \textbf{MATH500}~\citep{hendrycks2021measuringmathematicalproblemsolving}: A 500-problem subset of the MATH benchmark covering algebra,
  geometry, counting, probability, number theory, and precalculus. We report pass@4.
      \item \textbf{AIME 2022/2023/2024}: Competition-level mathematical reasoning problems from the American Invitational Mathematics
  Examination. Each year contains 30 problems. We report pass@32.
  \end{itemize}

  \begin{table}[H]
  \centering
  \caption{Evaluation settings for the DAPO 1.5B experiments.}
  \label{tab:evaluation_settings_dapo_1p5b}
  \vspace{0.5em}
  \small
  \begin{tabular}{@{}llllll@{}}
  \toprule
  \textbf{Benchmark} & \textbf{Metric} & \textbf{Samples} & \textbf{Temperature} & \textbf{Top-$p$} & \textbf{Max response length} \\
  \midrule
  GSM8K & pass@1 & $1$ & $0$ & $1.0$ & $4096$ \\
  MATH500 & pass@4 & $4$ & $0.6$ & $0.95$ & $8192$ \\
  AIME 2022--2024 & pass@32 & $32$ & $0.6$ & $0.95$ & $16384$ \\
  \bottomrule
  \end{tabular}
  \end{table}

\paragraph{Evaluation setup and metrics.} We report pass@1 with greedy decoding for GSM8K, pass@4 with temperature sampling for MATH500, and pass@32 with temperature sampling for AIME.  Specifically, for the greedy decoding in GSM8K, we set $\tau = 0$ and top-$p = 1.0$. For the temperature sampling in MATH500 and AIME, we utilize $\tau = 0.6$ and top-$p = 0.95$. Across all benchmarks, top-$k$ is disabled ($k = -1$) and a maximum prompt length of 1024 tokens is used. To accommodate the increasing reasoning complexity and expected output lengths, the maximum response length is set to 4096 for GSM8K, 8192 for MATH500, and 16,384 for AIME.
 
For answer scoring, we use the DAPO math reward implementation provided in verl. GSM8K is scored with flexible numerical extraction to avoid undercounting correct answers that do not follow the strict \texttt{\#\#\#\#} answer prefix. For MATH500 and AIME, we use the default mathematical answer normalization and exact-match scoring implemented by the DAPO reward function.

\section{Additional Ablation Study}
\label{app:additional_experiments}

\begin{figure}[t]    
    \centering       \includegraphics[width=1\linewidth]{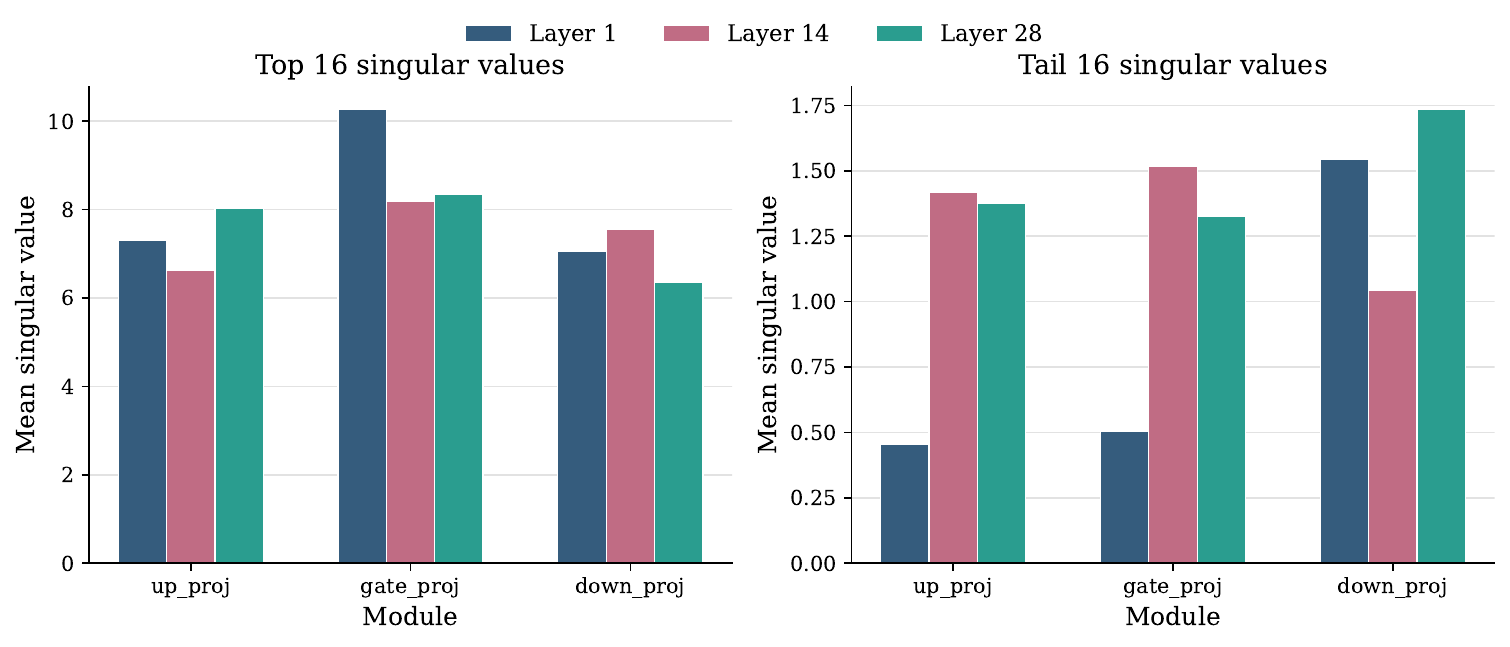}       
     \caption{Mean singular values of MLP projection weights in DeepSeek-R1-Distill-Qwen-1.5B (28
  layers total) across representative layers 1, 14, and 28, corresponding to the first,    
  middle, and last transformer layers, respectively.}
           \label{fig:singular_values}                     
\end{figure}

\subsection{PiSSA and MiLoRA failure analysis}                                
\label{app:pissa_analysis}        
\begin{table}[H]                             
\centering                                
\caption{Initialization norms (Frobenius) for LoRA and MiLoRA on DeepSeek-R1-Distill-Qwen-1.5B, averaged across all target linear layers at rank $r=16$.}         
\label{tab:init_norms}      
\vspace{0.5em}              
\small                             
\begin{tabular}{@{}lcc@{}} 
\toprule                                 
& LoRA & MiLoRA \\ 
\midrule                                  
$\|B_0 A_0\|_F$ & $0$ & $5.11$ \\ 
\bottomrule                                    
\end{tabular}                              
\end{table} 

Recent work~\citep{yin2025evaluatingparameterefficientmethods} attributes MiLoRA's failure to its near-zero initialization: the minor singular values are so small that once RL begins, the gradient flow pulls the update toward the principal components, causing a spectral collapse similar to PiSSA. We revisit this explanation on DeepSeek-R1-Distill-Qwen-1.5B and report two observations that differ from this account and may be of independent interest to readers.

First, the initial MiLoRA adapter is not negligible in magnitude. As reported in Table~\ref{tab:init_norms}, MiLoRA has $\|B_0 A_0\|_F = 5.11$, which differs from the near-zero initialization hypothesis. Moreover, standard LoRA satisfies $\|B_0 A_0\|_F = 0$ yet remains stable during RL training, suggesting that initial magnitude alone does not account for MiLoRA's behavior.

Second, the trained MiLoRA model retains non-negligible mass in the off-principal subspace. Figure~\ref{fig:singular_values} shows that although the top singular components dominate the tail across MLP projections, the tail values remain clearly non-zero. For DeepSeek-R1-Distill-Qwen-1.5B (28 layers total), we examine three representative layers ($1$, $14$, $28$) and find that the mean of the bottom-$16$ singular values ranges from $0.46$ to $1.73$, depending on projection the module. Together, these observations suggest that the tail spectrum does not appear to vanish, either at initialization or throughout training.

\subsection{Further ablations for RLVR}
\label{app:svd_cost}

\paragraph{Additional task domains and model families.}
We further evaluate on Llama 3.2-3B-Instruct and Qwen2.5-1.5B-Instruct (different model families). To assess domain generality, we additionally consider code generation: Llama 3.2-3B-Instruct trained on MBPP-style program synthesis~\citep{austin2021program} with a test-case-based reward. LoRA-RLPO remains stable and effective under this different reward structure. 

\begin{table}[H]
    \centering
    \caption{Code generation on Llama 3.2-3B-Instruct (MBPP-style, test-case reward).}
    \label{tab:code_gen_llama}
    \vspace{0.5em}
    \small
    \begin{tabular}{@{}ccc@{}}
    \toprule
    \textbf{LoRA} & \textbf{LoRA-RLPO(Ours)} & \textbf{LoRA-RLMO(Ours)} \\
    \midrule
    $43.44 \pm 3.10$ & ${45.67 \pm 1.41}$ & $\mathbf{46.11 \pm 1.26}$ \\
    \bottomrule
    \end{tabular}
\end{table}

\paragraph{Larger model size.}
We fine-tune Qwen2.5-7B-Instruct using GRPO~\citep{shao2024deepseekmathpushinglimitsmathematical} on DAPO-Math-17k~\citep{yu2026dapo}, with rank $r=32$ LoRA applied to all linear layers.

For the 7B experiments, we train for $150$ optimization steps using AdamW with a constant learning-rate schedule and no warmup. For fair comparisons, we use a learning rate of $1 \times 10^{-5}$ and scaling factor $\alpha = 64$ for standard LoRA, \RLPO, and \RLMO. For PiSSA and MiLoRA, we adopt the same learning rate of $1 \times 10^{-5}$ and $\alpha = 64$, in line with their standard tuning practices, ensuring a fair comparison across all methods. The effective batch size is $32$ ($4$ prompts per batch with $8$ responses sampled per prompt), and we use a KL penalty coefficient of $\beta = 0.001$ in the GRPO objective.

As shown in Table~\ref{tab:qwen_7b_results}, our proposed geometry-preserving initializations maintain their empirical advantages at this larger scale. Consistent with our observations on the 1.5B model, the SVD-based variants PiSSA and MiLoRA struggle under the strict KL constraints of RLVR, yielding average scores ($24.74$ and $26.72$) substantially below standard LoRA ($30.39$). In contrast, both \RLMO and \RLPO maintain stable optimization dynamics and achieve consistent improvements. Notably, \RLPO attains the highest average score of $\mathbf{35.96}$ across all mathematical reasoning benchmarks, with substantial gains on GSM8K and AIME. These results demonstrate that our theoretically motivated initializations scale robustly to 7B-parameter models without additional hyperparameter tuning.

\begin{table}[H]
\centering
\renewcommand{\arraystretch}{0.70}
\begin{tabular*}{\textwidth}{@{\extracolsep{\fill}}lccccc@{}}
\toprule
& LoRA & MiLoRA & \RLMO (Ours) & PiSSA & \RLPO (Ours) \\
\midrule
$B_0$ & $\mathbf{0}$ & $U_{-r}\Sigma_{-r}^{1/2}$ & $\mathbf{0}$ & $U_r\Sigma_{r}^{1/2}$ & $\mathbf{0}$ \\
$A_0$ & $\mathcal{N}(0, \frac{1}{n})$ & $\Sigma_{-r}^{1/2}V_{-r}^\top$ & $V_{-r}^\top$ & $\Sigma_{r}^{1/2}V_r^\top$ & $V_r^\top$ \\
\midrule
GSM8K$^{@1}$   & $79.13{\scriptstyle\pm4.6}$ & $57.27{\scriptstyle\pm12.9}$ & $74.30{\scriptstyle\pm12.3}$ & $55.60{\scriptstyle\pm44.0}$ & $\mathbf{85.29}{\scriptstyle\pm2.4}$ \\
MATH500$^{@4}$ & $52.80{\scriptstyle\pm2.3}$ & $54.13{\scriptstyle\pm1.5}$  & $\mathbf{56.73}{\scriptstyle\pm1.3}$ & $50.33{\scriptstyle\pm1.1}$ & $55.60{\scriptstyle\pm2.6}$ \\
AIME22$^{@16}$ & $3.33{\scriptstyle\pm2.7}$  & $2.22{\scriptstyle\pm1.9}$  & $7.78{\scriptstyle\pm1.6}$  & $4.44{\scriptstyle\pm1.9}$  & $\mathbf{7.78}{\scriptstyle\pm1.6}$ \\
AIME23$^{@16}$ & $11.11{\scriptstyle\pm4.2}$ & $11.11{\scriptstyle\pm1.9}$ & $10.00{\scriptstyle\pm0.0}$ & $6.67{\scriptstyle\pm3.3}$  & $\mathbf{13.33}{\scriptstyle\pm2.7}$ \\
AIME24$^{@16}$ & $5.56{\scriptstyle\pm1.6}$  & $8.89{\scriptstyle\pm1.9}$  & $13.33{\scriptstyle\pm2.7}$ & $6.67{\scriptstyle\pm3.3}$  & $\mathbf{17.78}{\scriptstyle\pm6.8}$ \\
\midrule
Avg            & $30.39$ & $26.72$ & $32.42$ & $24.74$ & $\mathbf{35.96}$ \\
\bottomrule
\end{tabular*}
\vspace{4pt}
\caption{Evaluation results of Qwen2.5-7B-Instruct fine-tuned with GRPO on DAPO-Math-17k. Our proposed \RLPO and \RLMO initializations outperform standard LoRA, whereas PiSSA and MiLoRA exhibit performance degradation.}
\label{tab:qwen_7b_results}
\end{table}

\paragraph{Learning rate sensitivity.}
\label{app:lr_sensitivity}
We conduct a learning-rate sweep on Qwen2.5-7B-Instruct to evaluate the robustness of different initialization methods. Figure~\ref{fig:lr_sweep} reports the average accuracy across learning rates $\{10^{-6}, 10^{-5}, 10^{-4}\}$. \RLPO and \RLMO consistently outperform standard LoRA at every learning rate. All methods achieve peak performance at $10^{-5}$, which we adopt as the default learning rate for all experiments reported in this paper.

\begin{center}
    \includegraphics[width=0.7\linewidth]{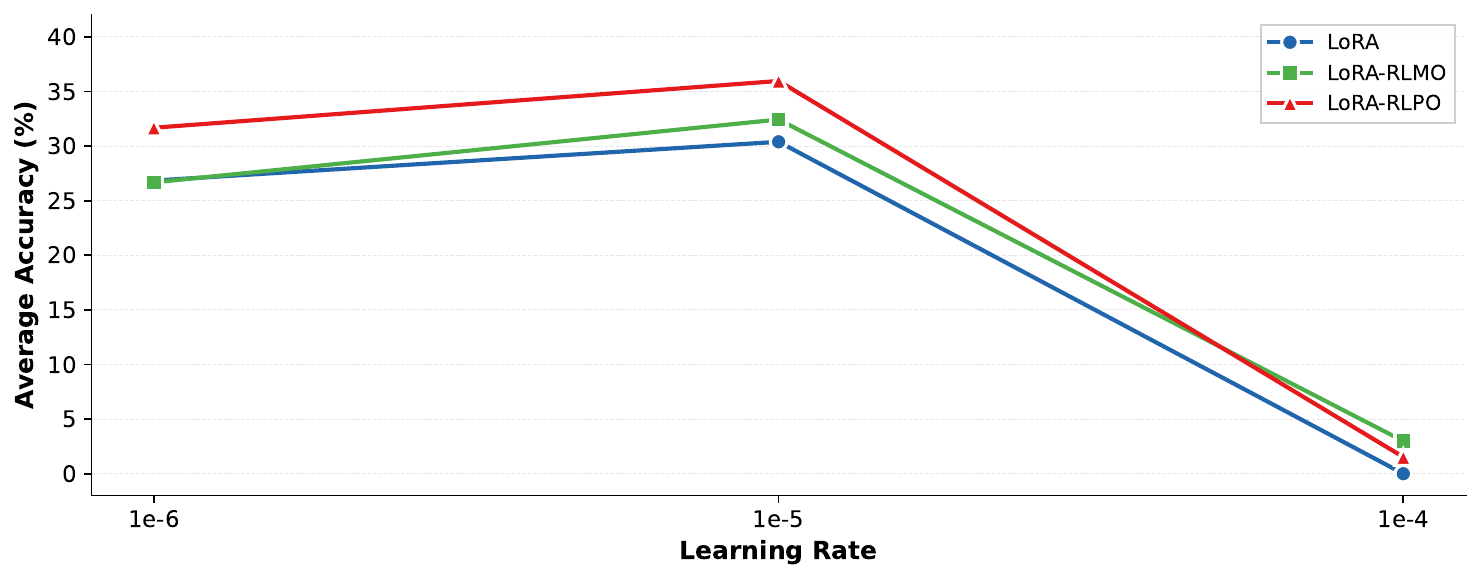}
    \captionof{figure}{Learning rate sensitivity comparison. \RLPO and \RLMO consistently outperform standard LoRA across learning rates, with peak performance at $10^{-5}$.}
    \label{fig:lr_sweep}
\end{center}

\paragraph{SVD initialization preprocessing cost.}
Table~\ref{tab:svd_cost} reports the SVD preprocessing cost across model sizes. Measurements are conducted with rank $r=32$ in bfloat16 precision on a single NVIDIA A100 GPU. Note that LoRA-RLPO and LoRA-RLMO have identical SVD costs. This preprocessing is a one-time cost incurred before training begins.

\begin{table}[H]
  \centering
  \caption{SVD preprocessing cost evaluated with $r=32$, bfloat16 precision, on a single A100 GPU.}
  \label{tab:svd_cost}
  \vspace{0.5em}
  \small
  \begin{tabular}{@{}lccc@{}}
  \toprule
  \textbf{Model} & \textbf{Parameters} & \textbf{Wall-Clock Time} & \textbf{Peak GPU Memory} \\
  \midrule
  Qwen3-4B-Instruct & 4B & 1.8 min & 8.3 GiB \\
  Qwen2.5-7B-Instruct & 7B & 3.5 min & 16.0 GiB \\
  Qwen2.5-14B-Instruct & 14B & 12.3 min & 29.8 GiB \\
  \bottomrule
  \end{tabular}
\end{table}

\subsection{Generalization to supervised fine-tuning (SFT)}
\label{sec:sft_generalization}

To investigate whether the proposed geometry-preserving initializations generalize beyond the RL fine-tuning framework, we conduct supervised fine-tuning (SFT) experiments on Qwen2.5-7B-Instruct. We evaluate across two benchmark categories:
\begin{itemize}
    \item \textbf{GLUE} (CoLA and MRPC): classification tasks evaluating linguistic acceptability and paraphrase detection.

    \item \textbf{GSM8K}: grade-school math reasoning, evaluated via strict exact match using the Language Model Evaluation Harness \citep{biderman2026lessonstrenchesreproducibleevaluation} under the standard Chain-of-Thought (CoT) prompting setup \citep{cobbe2021gsm8k, wei2022chain}.

\end{itemize}

As shown in Table~\ref{tab:sft_results}, both LoRA-RLPO and LoRA-RLMO generalize effectively to the SFT paradigm, consistently outperforming standard LoRA across all three benchmarks.

Figure~\ref{fig:sft_train_loss} shows the training loss curves across the three SFT tasks. Both LoRA-RLPO and LoRA-RLMO consistently converge faster and reach a lower final loss than standard LoRA. The improvement is most pronounced on the GSM8K reasoning task.

\begin{table}[H]
    \centering
    \caption{SFT evaluation results on Qwen2.5-7B-Instruct. Results are reported as mean $\pm$ standard deviation across 3 random seeds.}
    \label{tab:sft_results}
    \vspace{0.5em}
    \small
    \begin{tabular}{@{}lccc@{}}
    \toprule
    \textbf{Task} & \textbf{LoRA} & \textbf{LoRA-RLPO} & \textbf{LoRA-RLMO} \\
    \midrule
    CoLA (acc.) & $85.46 \pm 0.22$ & $86.42 \pm 0.24$ & $\mathbf{86.48 \pm 0.50}$ \\
    MRPC (acc.) & $86.52 \pm 0.25$ & $\mathbf{88.48 \pm 0.25}$ & $87.91 \pm 0.51$ \\
    GSM8K (strict) & $23.96 \pm 8.89$ & $29.74 \pm 0.54$ & $\mathbf{33.61 \pm 2.99}$ \\
    \bottomrule
    \end{tabular}
\end{table}

\begin{figure}[H]
    \centering
\includegraphics[width=0.9\linewidth]{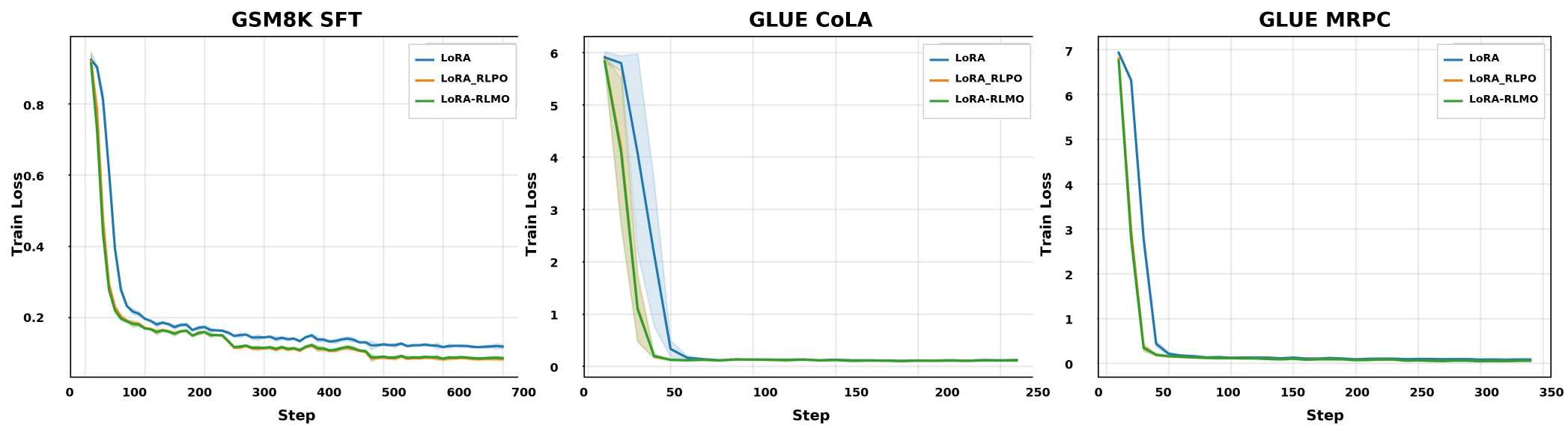} 
    \caption{Train loss curves across three SFT tasks (3 seeds, shaded region denotes $\pm 1$ standard deviation). Both LoRA-RLPO and LoRA-RLMO demonstrate faster convergence and lower final loss than standard LoRA.}
    \label{fig:sft_train_loss}
\end{figure}

\paragraph{Experimental setup.}
For all SFT experiments, we use rank $r=32$, $\alpha=64$, and a constant learning rate of $1 \times 10^{-5}$. Models are trained for $3$ epochs with a global batch size of $32$ ($4$ per device $\times\ 8$ gradient accumulation steps). All evaluations are averaged across three random seeds ($\{1, 42, 123\}$).

\end{document}